\documentclass{article} 
\usepackage{iclr2027_conference,times}

\usepackage{amsmath,amsfonts,bm}

\def\eqref#1{equation~\ref{#1}}

\def\1{\bm{1}}

\def\eps{{\epsilon}}

\def\rmI{{\mathbf{I}}}

\def\rmP{{\mathbf{P}}}

\def\vzero{{\bm{0}}}

\def\veps{{\bm{\eps}}}

\def\vc{{\bm{c}}}

\def\vg{{\bm{g}}}

\def\vu{{\bm{u}}}
\def\vv{{\bm{v}}}
\def\vw{{\bm{w}}}
\def\vx{{\bm{x}}}

\def\vz{{\bm{z}}}

\DeclareMathAlphabet{\mathsfit}{\encodingdefault}{\sfdefault}{m}{sl}
\SetMathAlphabet{\mathsfit}{bold}{\encodingdefault}{\sfdefault}{bx}{n}

\def\gN{{\mathcal{N}}}

\usepackage{hyperref}
\usepackage{url}
\usepackage{booktabs}
\usepackage{amsmath,amssymb,mathtools,amsfonts,bm}
\usepackage{microtype}
\usepackage{nicefrac}
\usepackage{xcolor}
\usepackage{graphicx}
\usepackage{subcaption}
\usepackage{multirow}
\usepackage{wrapfig}
\usepackage{algorithm}
\usepackage[noend]{algpseudocode}
\usepackage{enumitem}
\usepackage{array}
\usepackage{makecell}
\usepackage{amsthm, amssymb}
\usepackage{thmtools}
\usepackage{thm-restate}
\usepackage{pifont}

\def\eqref#1{(\ref{#1})}

\title{Manifold-Constrained 
Initial Noise Optimization for Efficient Generative Model Alignment}

\author{Jinho Chang \& Jong Chul Ye \thanks{Corresponding author.} 
\\
Kim Jaechul Graduate School of AI\\
KAIST\\
Seoul, South Korea \\
\texttt{\{jinhojsk515,jong.ye\}@kaist.ac.kr}
}

\iclrfinalcopy 
\begin{document}

\maketitle

\begin{abstract}
Recent advances in distillation and flow-map models have enabled deterministic one- or few-step generation for high-quality data, facilitating a new branch of reward alignment approaches that directly optimize the initial noise from a Gaussian distribution. However, most existing initial-noise optimization methods rely on first-order gradient information, which is either inapplicable or suffers from instability and inefficiency in black-box reward scenarios.
Here, we introduce ZeNOVA (\textbf{Ze}roth-order \textbf{N}oise \textbf{O}ptimization with \textbf{V}alue \textbf{A}nnealing), a stable and efficient initial noise alignment method in a gradient-free manner. Specifically, we address existing algorithms’ major challenge in black-box scenarios through annealed soft-value guidance, manifold-constrained hyperspherical Langevin dynamics, and Metropolis-Hastings jumping.
Extensive experiments on image and video generative models show that ZeNOVA outperforms all evaluated zeroth-order baselines by optimizing the initial noise toward higher rewards substantially more stably while exploiting the geometry of the Gaussian prior, demonstrating its practical applicability to various black-box reward alignment.
\end{abstract}

\section{Introduction}
Reward alignment of a pre-trained generative model aims to align its output distribution to admit higher external reward while minimizing its deviation from the original prior. A wide range of fine-tuning based approaches are presented grounded on several theoretical support, including entropy-regularized reinforcement learning~\citep{black2024training,fan2023dpok,liu2025flowgrpo,he2025tempflowgrpotimingmattersgrpo, zheng2026diffusionnft}, GFlowNets~\citep{zhang2025dagdb,liu2025resnabladb,liu2025nablar2d3}, and stochastic optimal control~\citep{domingo-enrich2025adjoint}. Inference-time methods employing various techniques such as {diffusion} posterior sampling~\citep{chung2023diffusion, tfg, freedom, mpgd, lgd} and path integral control~\citep{pmlr-v235-huang24g, Kim_2025_ICCV} have also been proposed to implement the same effect without additional training.

However, these works have to assume stochastic sampling trajectories involving many intermediate steps for their methods to analytically admit the reward-tilted distribution and thereby satisfy both the prior and high reward.
Consequently, naively extending the results of these works to deterministic generative models is either limited or does not guarantee its behavior as theoretically designed.
Moreover, recent advances in generative modeling have led to the development of few-step or one-step models including distilled models~\citep{yin2024one, yin2024improved, sauer2024adversarial} and flow-map models~\citep{song2023consistency, geng2025mean}, which reduces the effect of control in between intermediate sampling process.

\begin{figure}[t!]
     \centering
     \includegraphics[width=\linewidth]{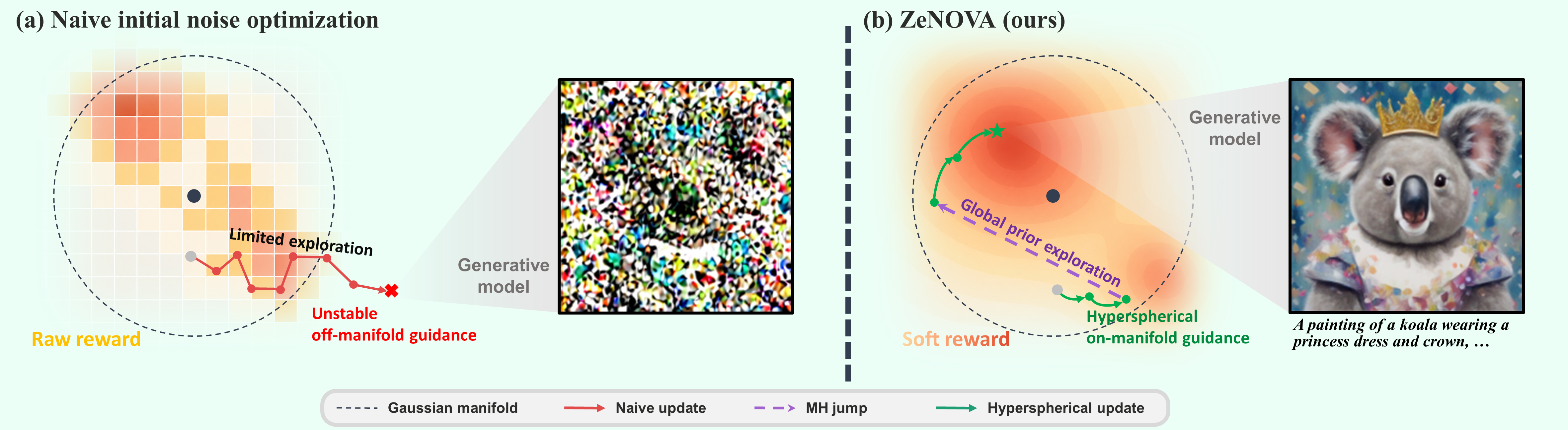}
     \caption{\textbf{Overview of ZeNOVA.} Naive initial-noise optimization suffers from limited exploration and unstable updates under inaccurate zeroth-order guidance from raw reward signal. ZeNOVA combines annealed soft-value guidance, hyperspherical updates, and Metropolis-Hastings jumping to enable stable local refinement and global exploration while respecting the Gaussian prior geometry.}
     \label{fig:main}
     \vspace{-3mm}
 \end{figure}

These recent advances facilitate a separate approach of initial-noise optimization as a practical route to inference-time reward alignment; a pretrained deterministic generator can produce different outputs by changing its Gaussian input, without updating its parameters~\citep{eyring2024reno, min2026origen, yoon2026psisampler}. Most existing approaches, however, obtain their optimization directions by differentiating through both the generator and the reward computation. This requirement limits their use when the reward is non-differentiable or available only through black-box evaluations, and can impose substantial computational overhead even when gradients are accessible. Estimating directions from reward queries offers a gradient-free alternative, but useful alignment also requires these estimates to support stable and efficient optimization in the high-dimensional noise space. In particular, seeking higher rewards should account for the Gaussian {noise} input geometry on which the generator was trained, which still remains challenging with optimizations in gradient-free manner.

Here, we propose \textbf{ZeNOVA}, a stable and efficient gradient-free initial-noise alignment method toward black-box objectives. 
Specifically, as illustrated in Figure~\ref{fig:main}, we identify the major challenges faced by existing initial noise optimization algorithms in black-box scenarios, and address them through three contributions of annealed soft-value guidance, manifold-constrained hyperspherical Langevin dynamics, and Metropolis-Hastings jumping.
Together, these components require only forward evaluations of the generator and reward model, making ZeNOVA applicable to black-box alignment without backpropagation. Experiments on image and video generative models show that ZeNOVA achieves higher rewards than all evaluated zeroth-order baselines while maintaining more stable initial-noise optimization, supporting its practical use across diverse reward alignment tasks.

\section{Preliminaries}

\textbf{Reward-tilted Distribution.}
When $p^{\mathrm{ref}}$ and $p^\theta$ denote the distribution from the pretrained model and the fine-tuned model with parameters $\theta$, Entropy-regularized RL formulates the alignment objective as balancing expected terminal reward against divergence from the reference process with a balancing parameter $\lambda>0$:
\begin{equation}
\label{eq:softrl}
    \mathcal{L}(\theta) = \mathbb{E}_{\vx_0 \sim p_0^\theta} \left[ r(\vx_0) \right]-\lambda D_{\mathrm{KL}} \left( p^\theta \,\|\, p^{\mathrm{ref}} \right),
\end{equation}
In many algorithms for solving Eq.~\eqref{eq:softrl} in diffusion and flow-matching models~\citep{fan2023dpok}, the trajectory-level KL admits a decomposition into a sum or integral of local policy divergences along the sampling trajectory. When $t\in[0,1]$ denotes the noise timestep and $\vx_0$ denotes the clean sample, it is known that the optimal policy for Eq.~\eqref{eq:softrl} induces the following marginal distribution~\citep{uehara2024fine} with \emph{Soft value function} $V(\vx_t)$, which represents the expected reward and KL penalty term over the optimal policy from $\vx_t$:
\begin{align}
\label{eq:tilted}
    p_t^\star(\vx_t) \propto p_t^{\mathrm{ref}}(\vx_t) e^{\frac{1}{\lambda}V(\vx_t)}, 
   \quad
    V(\vx_t) 
    :=\lambda\log \mathbb{E}_{p_t^{\mathrm{ref}}}\!\left[\exp\!\left(\frac{r(\vx_0)}{\lambda}\right)\middle|\vx_t\right]
\end{align}
Substituting the relationship between the output of a diffusion/flow-matching model and the score function~\citep{song2020score}, Eq.~\eqref{eq:tilted} shows that the difference between the optimal and reference models shall be proportional to the gradient of this value function $\nabla_{\vx_t}V(\vx_t)$.

Among several mathematical frameworks, 
Path-Integral Control(PIC)~\citep{pathintegral,pmlr-v235-huang24g, Uehara2024understanding} allows us to express the optimal policy and soft value function through weighted samples of the reference dynamics:
\begin{align}
\label{eq:optimal_control}
\nabla_{\vx_t}V(\vx_t)
&=\lim_{dt\to 0}\frac{\lambda}{g(t)dt}
\frac{\mathbb{E}_{P^{\mathrm{ref}}}\!\left[e^{r(\vx_0)/\lambda}d\vw_{t}\mid\vx_t\right]}
{\mathbb{E}_{P^{\mathrm{ref}}}\!\left[e^{r(\vx_0)/\lambda}\mid\vx_t\right]}
\end{align}
This representation weights sampled noise increments by terminal desirability, allowing the control to be estimated from samples of the reference model and terminal reward evaluations alone.

{
\textbf{Reward-tilted Noise Distribution.} 
Consider reward alignment of the input initial noise fed to the generative models, without assuming access to reward gradients.
Let $\vz\sim p_0:=\gN(0,\rmI_d)$ denote the $d$-dimensional input noise, and $G_\theta(\vz,\vc)$ be a generator conditioned on $\vc$,
leading to the reward in noise space and the reward-titled noise distribution:
\begin{equation}
    R(\vz) := r(G_\theta(\vz,\vc)),\quad     p^\star(\vz) \propto p_0(\vz)\exp\left(R(\vz) / \lambda\right) \label{eq:target_tilted}
\end{equation}
To perform optimization of the initial noise $\vz$, the simplest approach is to search for a maximizer of $R(\vz)$ over the entire support $\mathbb{R}^d$. But this is not only prone to mode collapse toward a few high-reward regions, it also disregards the Gaussian prior distribution of $\vz$ on which the latter generative model $G_\theta$ is trained, leading to reward hacking and degradation of the image quality. 
While several empirical regularizers were suggested to mitigate this~\citep{eyring2024reno, mono}, 
when $R$ is differentiable with respect to its input $\vz$, sampling from~\eqref{eq:target_tilted} is straightforward because its score function is tractable.
Hence, the following Langevin dynamics has $p^\star$ as its stationary distribution.
\begin{equation}
\label{eq:grad_langevin}
    d\vz
    = \left(-\vz+\frac{1}{\lambda}\nabla_{\vz}R(\vz)\right)dt
    + \sqrt{2}\,d\vw
\end{equation}
This observation is utilized in several gradient-based initial-noise alignment algorithms~\citep{min2026origen, yoon2026psisampler}.
}

\section{Method}
\label{sec:method}
{Our interest is a more challenging setting in which $R$ can be evaluated but $\nabla_{\vz}R$ is unavailable. }
Here, 
we {first} show why a direct zeroth-order extension of gradient-based Langevin optimization is inadequate, and {then} describe the proposed contributions of ZeNOVA to address those challenges.

\subsection{Challenges for Black-box Objectives.}
\label{subsec:zo_failure}

\textbf{A natural zeroth-order baseline.}
For a small perturbation scale $\delta>0$, a classic gradient estimation method with function evaluations according to Stein's Lemma~\citep{Stein1981} is
\begin{equation}
\label{eq:local_zo_grad}
    \mathbb{E}_{\veps}\left[\frac{1}{\delta}R(\vz+\delta\veps)\veps\right] = \nabla_{\vz}\mathbb{E}_{\veps\sim\gN(0,\rmI)}[R(\vz+\delta\veps)] \approx \nabla_{\vz}R(\vz),
\end{equation}
which can be estimated from $N$ particles as
\begin{equation}
\label{eq:local_zo_estimator}
    \hat{\vg}(\vz)
    = \frac{1}{N}
    \sum_{n=1}^{N}\left[ \frac{1}{\delta}\left(R(\vz+\delta\veps^{(n)})-\bar R\right)\veps^{(n)} \right]
\end{equation}
where $\bar R$ is an optional control variate to reduce variance, typically set as the mean of $R$.
This is the familiar Gaussian-search estimator used in evolution strategies such as NES~\citep{wierstra2014natural}.
Therefore, the most direct gradient-free extension of~\eqref{eq:grad_langevin} is to replace $\nabla_{\vz}R(\vz)$ by $\hat{\vg}(\vz)$.
We refer to this construction as \emph{naive zeroth-order(ZO) Langevin}.

However, while this naive alternative seems theoretically sound, it performs poorly when applied to practical initial-noise optimization. We analyze the reasons for this failure through the following three failure modes.

\textbf{Failure mode I: Nature of Black-box rewards.}
Many black-box rewards can be discrete, quantized, thresholded, ranking-based, or simply locally constant. In this case, the gradient (including its estimator) becomes $0$ and does not carry any signal for the optimization. 
Naively increasing $\delta$ in Eq.~\eqref{eq:local_zo_estimator} can search a broader range to recover reward signal, but this makes the estimator more deviate from the ground truth gradient.

\textbf{Failure mode II: Destabilization from few-particle estimates.}
To make the zeroth-order estimator accurate, the required number of particles $N$ has to grow proportionally to the number of dimensions $d$. However, in most high-dimensional noise spaces for recent generative models, a limited computational budget usually admits $N\ll d$, which introduces unavoidable inaccuracy in the estimator.

Empirically, we observe that this inaccuracy goes beyond making the optimization process less efficient, and frequently drives $\vz$ completely out of distribution, leading to sample collapse. Figure~\ref{fig:failure} shows an example of reward-tilted Langevin dynamics with Pickscore~\citep{kirstain2023pickapic} failing when the actual reward gradient is replaced by $\hat{\vg}(\vz)$. Note that when this happens, the norm of $\vz$ drifts away from the expected value for a Gaussian distribution, rapidly degrading the model output.

\textbf{Failure mode III: Hazard of bad initial states.}
Even if the aforementioned issues are resolved, a sequential and incremental motion of Langevin dynamics suffers from moving between separated modes slowly, as it requires traversing intermediate low-likelihood regions. This makes the finite-budget trajectories sensitive to their initialization.

\begin{figure}
  \begin{minipage}[t]{.34\linewidth}
    \centering
    \includegraphics[width=\linewidth]{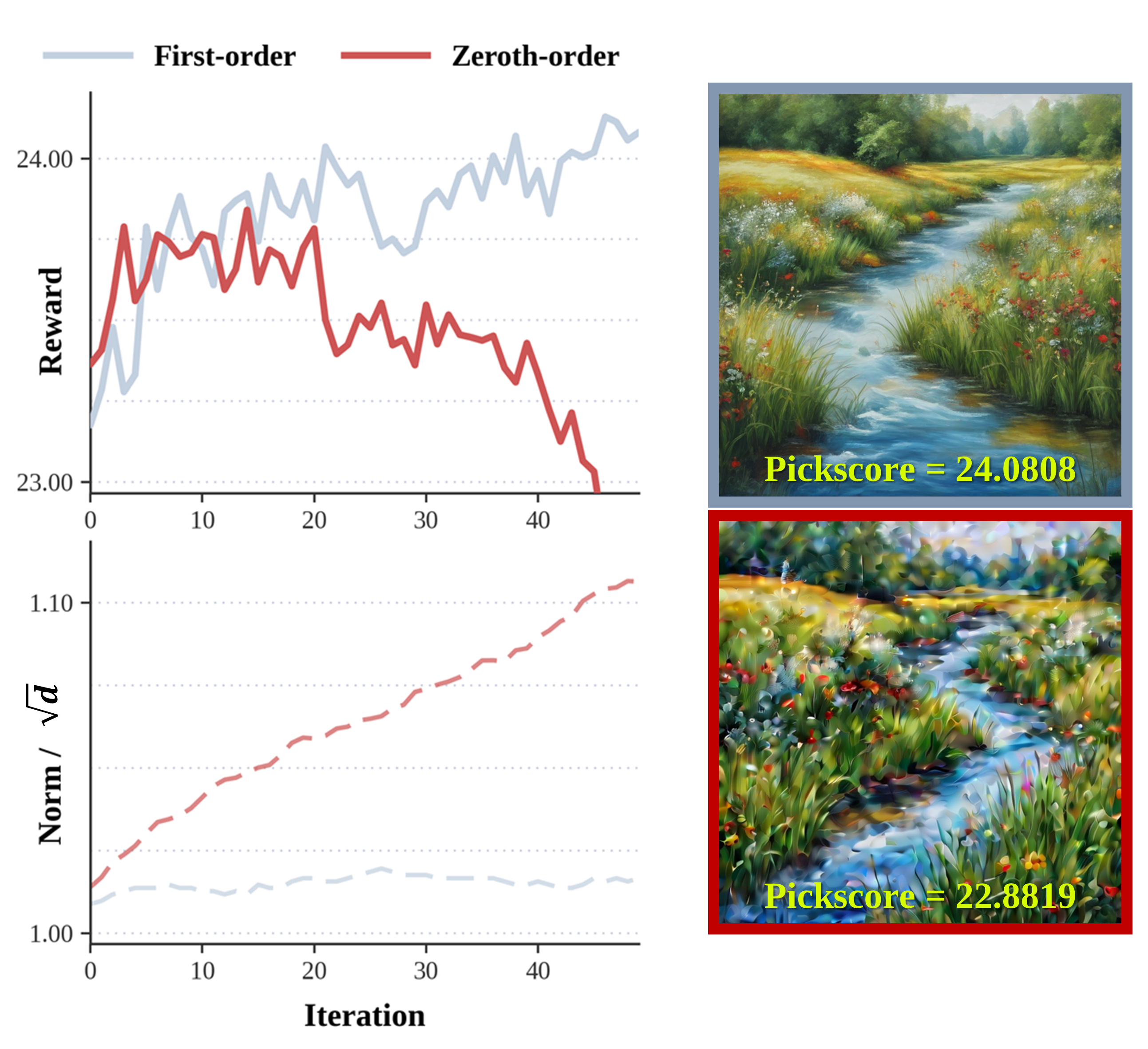}
    \captionof{figure}{Replacing real gradients in Langevin dynamics(grey) with zeroth-order estimates(red).}\label{fig:failure}%
  \end{minipage}\hfill
  \begin{minipage}[t]{.63\linewidth}
    \centering
    \includegraphics[width=\linewidth]{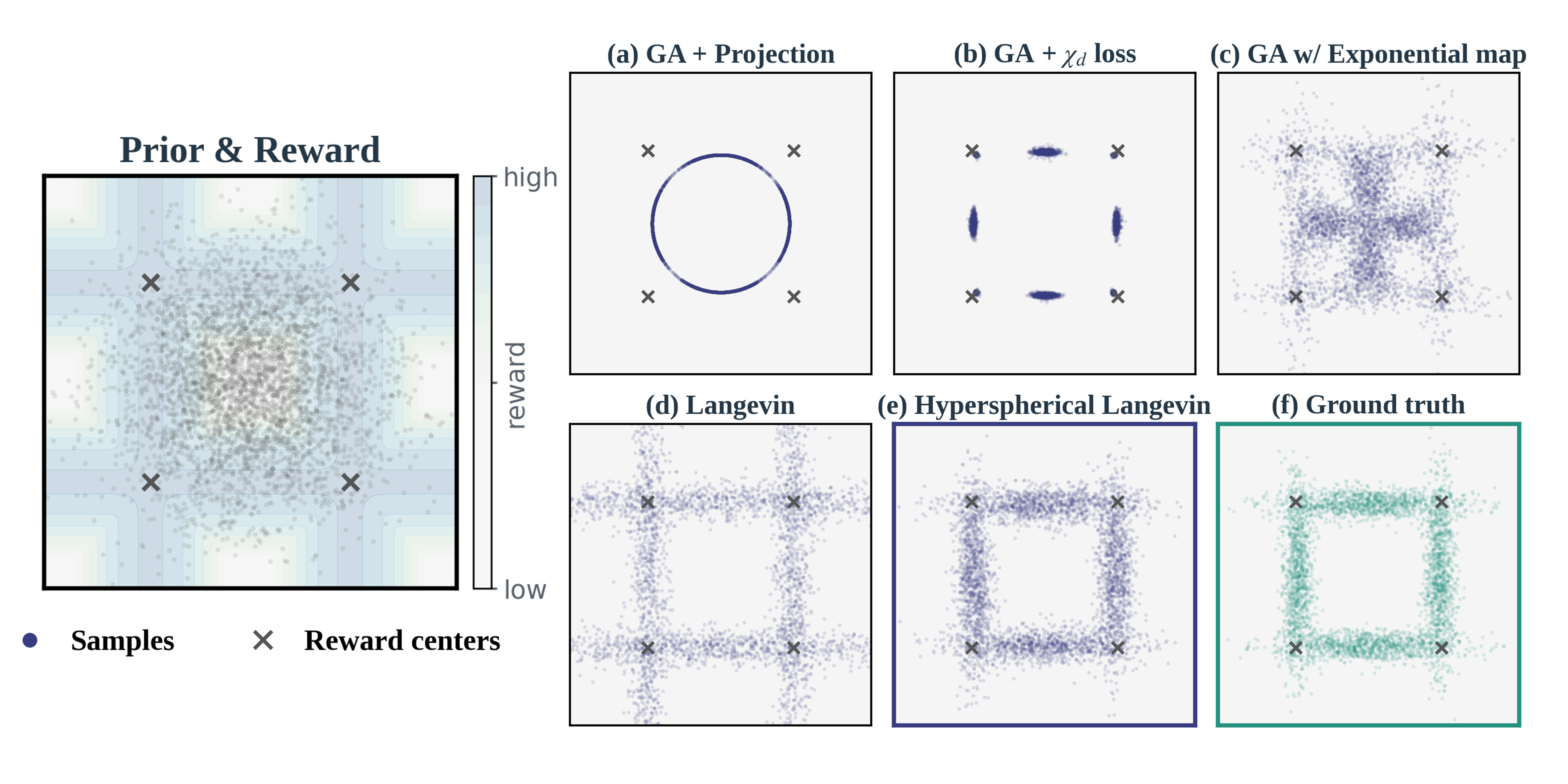}
    \captionof{figure}{2D toy experiment for sampling from reward-tilted target with zeroth-order gradient estimation.} \label{fig:toy}%
  \end{minipage}
  \vspace{-5mm}
\end{figure}

\subsection{ZeNOVA: Noise Optimization for Black-box Rewards with Value Annealing}

\textbf{Zeroth-order value-gradient estimator.}
Note that when the reward gradient control term is removed from Eq.~\eqref{eq:grad_langevin}, the remainder is an Ornstein-Uhlenbeck (OU) process~\citep{oksendal2003stochastic} that preserves $p_0=\gN(0,\rmI)$.
We take this process as a reference and consider a soft value that aggregates future terminal rewards. The OU process admits an exact Gaussian transition over any time interval $h>0$: setting $\sigma^2=1-e^{-2h}$, we can express its transition using a single kernel,
\begin{equation}
\label{eq:ou_transition}
    q_\sigma(\vz'\mid\vz)
    = \gN\!\left(\vz';\sqrt{1-\sigma^2}\vz,\sigma^2\rmI\right),
    \qquad 0<\sigma<1.
\end{equation}
The parameter $\sigma$ specifies the~\emph{horizon} of the OU process: large $\sigma$ nearly forgets the current state and broadly explores the reference prior, whereas small $\sigma$ concentrates around the current state.

Following the previous work in the soft RL~\citep{uehara2024fine} and PIC~\citep{pathintegral}, we define the soft value as the log-expectation of the terminal reward $\exp(R/\lambda)$ under this transition kernel:
\begin{equation}
\label{eq:soft_value}
    V_\sigma(\vz)
    := \lambda\log
    \mathbb{E}_{\vz'\sim q_\sigma(\cdot\mid\vz)}
    \left[\exp\left(\frac{R(\vz')}{\lambda}\right)\right]
\end{equation}
Then, we propose to sample from an intermediate distribution obtained by exponentially tilting the prior with this soft value, instead of the raw reward $R$:
\begin{equation}
\label{eq:intermediate_target}
    p_\sigma^\star(\vz) \propto p_0(\vz) \exp\left(\frac{V_\sigma(\vz)}{\lambda}\right),
\end{equation}
This choice is motivated by two properties of the value function: First, even when the reward function is poorly behaved (\emph{e.g.}, discrete or piecewise constant), the value function is defined as its expectation over a Gaussian kernel, and therefore continuous and well-behaved~\citep{jensen2026value}.
Moreover, this intermediate distribution induced by the soft-value function has an exact relationship to 
$p^\star$:
\begin{restatable}[Interpolation of target distribution]{proposition}{OUInterpolation}
\label{prop:ou_interpolation}
Let reward-tilted distribution $p^\star$ be defined as Eq.~\eqref{eq:target_tilted} and soft-value-tilted distribution $p_\sigma^\star$ as Eq.~\eqref{eq:intermediate_target}. Then $p_\sigma^\star$ is equal to $p^\star$ passed through a Gaussian kernel of Eq.~\eqref{eq:ou_transition}:
\begin{equation}
\label{eq:ou_interpolation}
    p_\sigma^\star(\vz) \propto \int q_\sigma(\vz\mid\vz')\,p^\star(\vz')\,d\vz'.
\end{equation}
\end{restatable}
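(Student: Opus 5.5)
The plan is to compute the right-hand side of Eq.~\eqref{eq:ou_interpolation} directly and match it with Eq.~\eqref{eq:intermediate_target}. The key fact is that the OU kernel $q_\sigma$ of Eq.~\eqref{eq:ou_transition} is \emph{reversible} with respect to $p_0=\gN(0,\rmI_d)$, i.e.\ it satisfies detailed balance:
\begin{equation*}
    p_0(\vz)\,q_\sigma(\vz'\mid\vz) = p_0(\vz')\,q_\sigma(\vz\mid\vz').
\end{equation*}
I will verify this first. Both sides are the density of a jointly Gaussian pair $(\vz,\vz')$ with standard normal marginals and cross-covariance $\sqrt{1-\sigma^2}\,\rmI$, which is symmetric in the two arguments. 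Alternatively, one can expand the exponents: $\|\vz\|^2+\|\vz'-\sqrt{1-\sigma^2}\vz\|^2/\sigma^2$ simplifies to $(\|\vz\|^2+\|\vz'\|^2-2\sqrt{1-\sigma^2}\,\vz^\top\vz')/\sigma^2$, which is symmetric. The normalizing constants agree trivially.

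With detailed balance in hand, I substitute $p^\star(\vz')\propto p_0(\vz')e^{R(\vz')/\lambda}$ into the integral:
\begin{equation*}
    \int q_\sigma(\vz\mid\vz')\,p_0(\vz')\,e^{R(\vz')/\lambda}\,d\vz'
    = p_0(\vz)\int q_\sigma(\vz'\mid\vz)\,e^{R(\vz')/\lambda}\,d\vz'
    = p_0(\vz)\,\mathbb{E}_{\vz'\sim q_\sigma(\cdot\mid\vz)}\!\left[e^{R(\vz')/\lambda}\right].
\end{equation*}
By Eq.~\eqref{eq:soft_value}, the expectation on the right equals $\exp(V_\sigma(\vz)/\lambda)$. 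So the integral is proportional to $p_0(\vz)\exp(V_\sigma(\vz)/\lambda)$, which is $p_\sigma^\star(\vz)$ up to normalization. This is exactly the claim.

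The only genuine obstacle is well-definedness. I need $\int p_0 e^{R/\lambda}<\infty$ so that $p^\star$ is normalizable, and I need $V_\sigma$ finite. I will assume this integrability, which is implicit in Eq.~\eqref{eq:target_tilted}; it holds, for instance, when $R$ is bounded above. Under that assumption, Tonelli's theorem applied to the nonnegative integrand shows that the total mass of the right-hand side equals that of $p_0 e^{R/\lambda}$, hence is finite. Consequently $V_\sigma(\vz)<\infty$ for almost every $\vz$, and in fact for every $\vz$ by Gaussian domination. This also makes the normalizing constants of the two sides match, not merely their proportionality.
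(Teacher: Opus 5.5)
Your proposal is correct and is essentially the paper's proof. Both combine the reversibility $p_0(\vz)q_\sigma(\vz'\mid\vz)=p_0(\vz')q_\sigma(\vz\mid\vz')$ of the OU kernel with the identity $e^{V_\sigma(\vz)/\lambda}=\int q_\sigma(\vz'\mid\vz)e^{R(\vz')/\lambda}\,d\vz'$; you run the chain of equalities in the opposite direction, and your explicit check of detailed balance and your integrability remarks (including finiteness of $V_\sigma$ everywhere via $q_\sigma(\cdot\mid\vz)\le C(\vz)\,p_0$ for $\sigma<1$) fill in details the paper asserts without proof or omits.
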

This implies that $p_\sigma^\star\to p_0$ as $\sigma\to 1$, and $p_\sigma^\star \to p^\star$ as $\sigma\to 0$, where the proof is in Appendix~\ref{proof:prop1}.
Motivated by this interpolation, we use a decreasing horizon schedule $1>\sigma_1>\sigma_2>\cdots>\sigma_{K}>0$ with the aim of tracking the sequence $p_{\sigma_1}^\star,p_{\sigma_2}^\star,\ldots,p_{\sigma_{K}}^\star$ in a coarse-to-fine manner; larger horizons aggregate reward information over a broad neighborhood, while smaller horizons bring the target closer to the original reward tilt and support local refinement.

Crucially, $\nabla V_\sigma$ can be estimated without differentiating $R$.
Applying the Gaussian score identity to~\eqref{eq:soft_value}, we obtain
\begin{equation}
\label{eq:value_grad_exact}
    \nabla_{\vz}V_\sigma(\vz)
    = \frac{\lambda\sqrt{1-\sigma^2}}{\sigma}
    \frac{
        \mathbb{E}_{\veps}\left[
        \exp(R(\sqrt{1-\sigma^2}\vz+\sigma\veps)/\lambda)\,\veps
        \right]
    }{
        \mathbb{E}_{\veps}\left[
        \exp(R(\sqrt{1-\sigma^2}\vz+\sigma\veps)/\lambda)
        \right]
    }.
\end{equation}
See Appendix~\ref{app:value_gradient} for the derivation. With $N$ reward evaluations, we use
\begin{align}
\label{eq:value_grad_estimator}
    \widehat{\vg}_\sigma(\vz)
    &:= \frac{\lambda\sqrt{1-\sigma^2}}{\sigma}
    \sum_{n=1}^{N} \frac{\exp(R(\sqrt{1-\sigma^2}\vz+\sigma\veps^{(n)})/\lambda)}
    {\sum_{j=1}^{N}\exp(R(\sqrt{1-\sigma^2}\vz+\sigma\veps^{(j)})/\lambda)}\veps^{(n)}\qquad \veps^{(n)}\sim\gN(0,\rmI).
\end{align}
The normalized exponential weights are implemented with the usual log-sum-exp stabilization.
For a fixed $\sigma$, we can replace the reward gradient in Eq.~\eqref{eq:grad_langevin} with Eq.~\eqref{eq:value_grad_estimator}.

\textbf{Hyperspherical Langevin dynamics.}
Even for soft value gradient functions, their Monte Carlo estimate can still be noisy for small $N$.
Meanwhile, the standard Gaussian prior is rotationally symmetric, and its density depends only on the radius. Moreover, in high dimensions, Gaussian samples exhibit a uniform likelihood for any direction, whereas their norm concentrates within a narrow range around $\sqrt{d}$.
Indeed, as described in Figure~\ref{fig:failure}, failure of the update with zeroth-order estimation is often accompanied by a drift of the noise norm away from this Gaussian typical range. 

This observation suggests treating the two degrees of freedom separately and formulating the Langevin dynamics in hyperspherical coordinates, separating the prior-sensitive radial motion from the angular motion. Note that sampling $\vz$ from the standard Gaussian can be decomposed as:
\begin{equation}
\vz=\rho \vu\quad\text{where}\quad\rho\sim\chi_d, \vu\in\text{Unif}(\mathbb{S}^{d-1}),
\end{equation}

Therefore, for a given current state $\vz_k$, we decompose it into its norm $\rho_k$ and its unit direction vector $\vu_k$, and separately perform \emph{angular} Langevin update and \emph{radial} Langevin update.

\textit{Angular Langevin on the sphere.}
Let $\rmP_{\vu}^{\perp}:=\rmI-\vu\vu^\top$ denote the projection onto the tangent space of $\vu$.
For $\vg_\sigma(\vz):=\nabla_{\vz}V_\sigma(\vz)$, the angular update uses a gradient and noise on a tangent space, followed by spherical exponential map~\citep{lee2018introduction}:
\begin{align}
\label{eq:spherical_update}
    \vv_k
    &= \rmP_{\vu_k}^{\perp}\left(
    \frac{\eta_{A}\rho_k}{\lambda}
    \widehat{\vg}_{\sigma_k}(\vz_k)
    + \sqrt{2\eta_{A}}\,
    \bm{\xi}_k\right),\qquad \bm{\xi}_k\sim\gN(0,\rmI)\\
    \vu_{k+1} &= \operatorname{Exp}_{\vu_k}(\vv_k)
\end{align}
where $\operatorname{Exp}(\cdot)$ denotes an exponential mapping as follows:
\begin{equation}
\label{eq:sphere_expmap}
    \operatorname{Exp}_{\vu}(\vv)
    = \cos(\|\vv\|_2)\vu
    + \sin(\|\vv\|_2)\frac{\vv}{\|\vv\|_2}
\end{equation}

\textit{Radial Langevin.}
Since the reference radial distribution is $\rho\sim\chi_d$, a radial update step is
\begin{equation}
\label{eq:radial_update}
    \rho_{k+1}
    = \rho_k
    + \eta_{R}
    \left(
        \frac{d-1}{\rho_k}-\rho_k
        +\frac{1}{\lambda}\vu_k^\top\widehat{\vg}_{\sigma_k}(\vz_k)
    \right)
    + \sqrt{2\eta_{R}}\,\xi_k,
    \qquad \xi_k\sim\gN(0,1).
\end{equation}
The radius remains stochastic and reward-adaptive, while noisy high-dimensional guidance is regularized by an analytic one-dimensional prior score rather than by an explicit norm constraint.

For any fixed $\sigma$, this continuous spherical-radial Langevin dynamics has~\eqref{eq:intermediate_target} as its stationary distribution of $\rho\vu$.
This spherical-radial Langevin update allows us to design their dynamics separately(\emph{e.g.}, small step size for radial update), which stabilizes noisy zeroth-order guidance without freezing the radius or relying on an explicit shell projection.

\textbf{Metropolis-Hastings Jumping.}
Since incremental updates may be a somewhat inevitable limitation of Langevin dynamics to simulate an SDE, we briefly examine a fundamentally different alternative of utilizing the Metropolis–Hastings(MH) algorithm:

\begin{restatable}[Metropolis-Hastings algorithm for reward-tilted distribtuion]{proposition}{MHJumping}
\label{prop:mh-jump}
The Metropolis-Hastings chain with proposal $\vz'\sim p_0$ and acceptance probability $\alpha(\vz,\vz')=\min\{1,\exp((R(\vz')-R(\vz))/\lambda)\}$ converges to the reward-tilted distribution $p^\star(\vz)\propto p_0(\vz)e^{R(\vz)/\lambda}$.
\end{restatable}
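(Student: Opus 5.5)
The plan is to verify detailed balance of the Metropolis--Hastings kernel with respect to $p^\star$, and then obtain convergence from standard ergodicity results for independence samplers. The chain is an independence sampler: the proposal density $q(\vz'\mid\vz)=p_0(\vz')$ does not depend on the current state. The general MH acceptance ratio for target $p^\star$ is
\begin{equation*}
\frac{p^\star(\vz')\,q(\vz\mid\vz')}{p^\star(\vz)\,q(\vz'\mid\vz)}
=\frac{p_0(\vz')e^{R(\vz')/\lambda}\,p_0(\vz)}{p_0(\vz)e^{R(\vz)/\lambda}\,p_0(\vz')}
=\exp\!\left(\frac{R(\vz')-R(\vz)}{\lambda}\right).
\end{equation*}
The prior factors cancel exactly, and the normalizing constant of $p^\star$ drops out. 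The stated $\alpha(\vz,\vz')$ is therefore precisely the standard MH acceptance probability for this proposal and target.

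\textbf{Detailed balance.} The transition kernel is $K(\vz,d\vz')=p_0(\vz')\alpha(\vz,\vz')\,d\vz'+r(\vz)\delta_{\vz}(d\vz')$, where $r(\vz)$ is the rejection probability. For the off-diagonal part we need
\begin{equation*}
p^\star(\vz)p_0(\vz')\alpha(\vz,\vz')\propto p_0(\vz)p_0(\vz')\min\{e^{R(\vz)/\lambda},e^{R(\vz')/\lambda}\}.
\end{equation*}
The right-hand side is symmetric in $(\vz,\vz')$, which gives detailed balance. The rejection part is trivially reversible. Integrating over $\vz$ then shows that $p^\star$ is invariant.

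\textbf{Convergence.} We need irreducibility and aperiodicity. Since $p_0$ has full support on $\mathbb{R}^d$ and $\alpha>0$ everywhere (as $R$ is finite), any set of positive Lebesgue measure is reached in one step with positive probability. The chain is therefore $p_0$-irreducible, and hence $p^\star$-irreducible because $p^\star\ll p_0$. Aperiodicity follows because the one-step kernel charges every such set. Standard MCMC theory (e.g., Tierney 1994) then gives convergence in total variation for $p^\star$-almost every start.

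\textbf{Main obstacle.} The delicate point is the strength of convergence and the integrability conditions. $p^\star$ must be normalizable, which requires $\mathbb{E}_{p_0}[e^{R/\lambda}]<\infty$; we will assume this (it holds, for instance, if $R$ is bounded above, as typical rewards are). For uniform geometric ergodicity we would invoke Mengersen--Tweedie: an independence sampler is uniformly ergodic iff $p^\star(\vz)/p_0(\vz)\le M$. Here that ratio equals $e^{R(\vz)/\lambda}/Z$, so it holds whenever $\sup R<\infty$, giving $\|K^n(\vz,\cdot)-p^\star\|_{TV}\le(1-1/M)^n$. We will state the bounded-reward case as the clean result and note that only the weaker ergodic convergence survives without it.
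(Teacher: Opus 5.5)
Your proposal is correct and follows the same route as the paper, which likewise identifies the given $\alpha$ as the standard Metropolis--Hastings acceptance probability for the independence proposal $q(\vz'\mid\vz)=p_0(\vz')$ by cancelling the prior factors in the ratio. The paper stops at that identification and leaves detailed balance, ergodicity and normalizability implicit, so your explicit reversibility check, irreducibility and aperiodicity argument, assumption $\mathbb{E}_{p_0}[e^{R/\lambda}]<\infty$, and Mengersen--Tweedie uniform-ergodicity remark under $\sup R<\infty$ make your version strictly more complete.
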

See Appendix~\ref{proof:prop2} for the derivation. MH chain targets the same distribution with Langevin dynamics and provides global exploration, but it is inefficient as a stand-alone sampler: Namely, for a finite number of proposals, the MH algorithm is strictly worse than Best-of-K(BoK) sampling. 
Moreover, as the sample reward increases, every proposal is drawn afresh regardless of the current high-reward state, and the MH algorithm rejects almost all proposals.
Nonetheless, this has a complementary behavior against Langevin dynamics, allowing occasional global moves to new reward modes with much smaller computation than gradient estimation.
We therefore interleave a small number of independent MH transitions with Langevin updates, dubbed as \emph{MH jumping}. MH jumping is effective in the early stage of the optimization, after which the annealed soft-value and hyperspherical Langevin updates perform coarse-to-fine refinement.

\textbf{ZeNOVA: The Algorithm.} 
Combining the three components yields \textbf{ZeNOVA}(\textbf{Ze}roth-order \textbf{N}oise \textbf{O}ptimization with \textbf{V}alue \textbf{A}nnealing),
where the complete procedure is summarized in Algorithm~\ref{alg:zenova} of Appendix~\ref{sec:algorithm}.
The two principal hyperparameters governing inference-time computation are the number of particles $N$ and the number of sampling iteration $K$.
Increasing $N$ improves the Monte Carlo estimate of the soft value gradient, while increasing $K$ allows the sampler to traverse the interpolation more gradually.
Importantly, ZeNOVA requires only forward evaluations of the generator and reward model.

\section{Experiments}
We evaluate ZeNOVA across a range of deterministic generators and reward functions to demonstrate its effectiveness, scalability, and broad applicability.

\textbf{Toy Experiment.}
We design a reward-aligned sampling problem of 2-D data to identify the effect of existing sampling algorithms under zeroth-order gradient estimation. Starting from $p_0=\gN(0,\rmI)$, we use a synthetic reward with narrow high-reward bands and aim to sample from $p^\star(\vz)\propto p_0(\vz)e^{R(\vz)}$. 
Figure~\ref{fig:toy} compares (a) Gradient Ascent(GA) with fixed-radius projection~\citep{zeno}, GA with radial $\chi_d$ regularization loss~\citep{eyring2024reno}, GA with exponential-map updates~\citep{mono}, (d) Cartesian Langevin updates, and (e) our hyperspherical Langevin updates against (f) ground-truth samples obtained by reward-weighted resampling of prior samples. 
Unlike the other baselines (a)-(d) whose samples concentrate in partial high-reward regions due to their empirical regularizers, our hyperspherical update faithfully reproduces the ground-truth distribution, without artifacts such as samples far along the reward bands in the naive cartesian update.

\begin{figure}[t!]
     \centering
     \includegraphics[width=\linewidth]{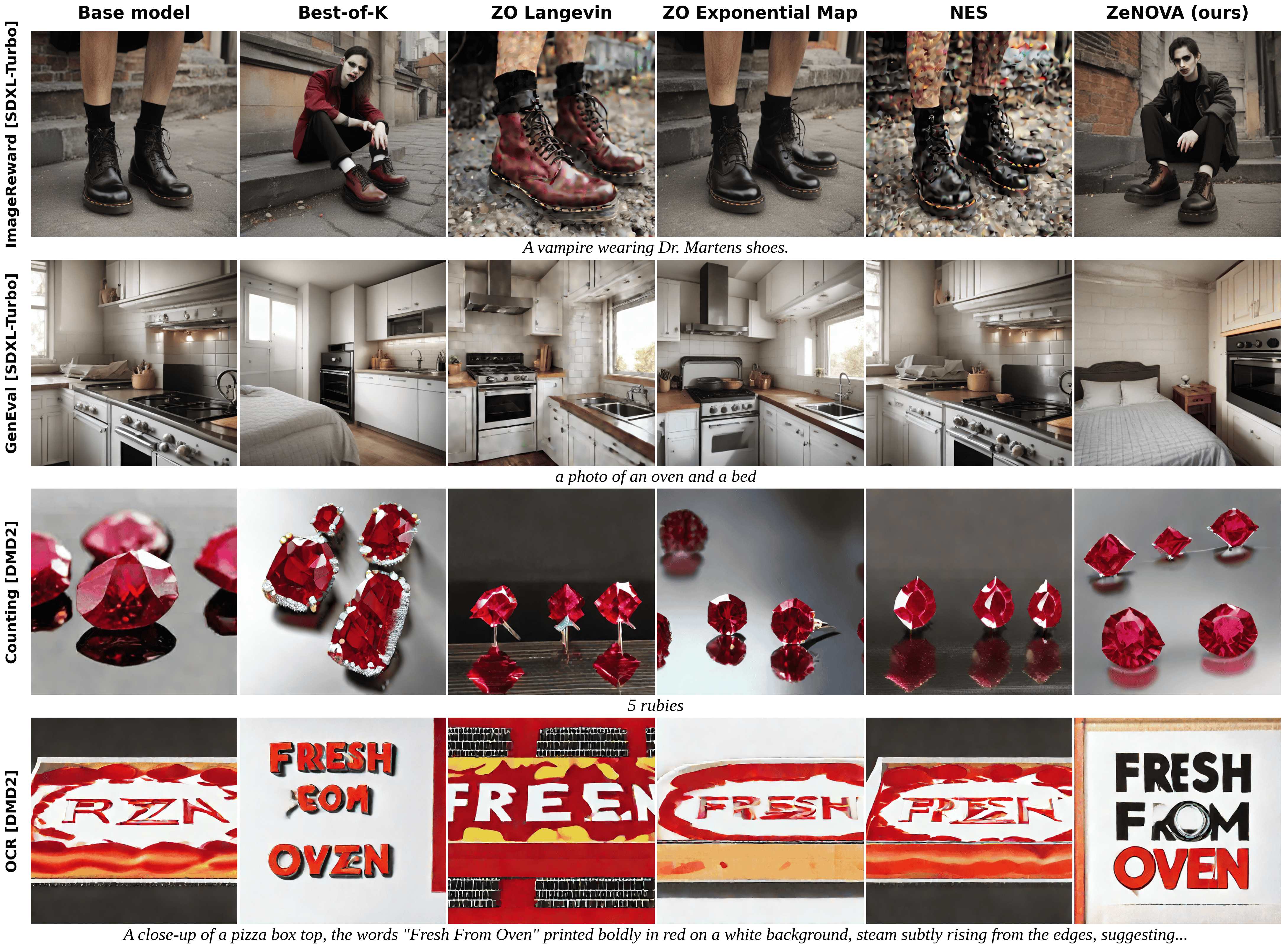}
     \caption{Qualitative image results from SDXL-Turbo and DMD2.}
     \label{fig:qual}
     \vspace{-3mm}
 \end{figure}

\begin{table}[t]
\centering
\caption{Initial noise reward alignment performance on text-to-image models. \textbf{Bold}: best performance, ``-": Not available. }
\label{tab:main_results}
\small
\setlength{\tabcolsep}{3pt}
\resizebox{\linewidth}{!}{%
\begin{tabular}{l|cccccccccccc}
\toprule
& \multicolumn{3}{c}{ImageReward} & \multicolumn{3}{c}{PickScore} & \multicolumn{2}{c}{GenEval} & \multicolumn{2}{c}{Counting} & \multicolumn{2}{c}{OCR} \\
\cmidrule(lr){2-4}\cmidrule(lr){5-7}\cmidrule(lr){8-9}\cmidrule(lr){10-11}\cmidrule(lr){12-13}
Method & Reward $\uparrow$ & Aes. $\uparrow$ & Val. $\uparrow$ & Reward $\uparrow$ & Aes. $\uparrow$ & Val. $\uparrow$ & Reward $\uparrow$ & Aes. $\uparrow$ & Reward $\downarrow$ & Aes. $\uparrow$ & Reward $\uparrow$ & Aes. $\uparrow$ \\
\midrule
\multicolumn{13}{c}{\textbf{SDXL-Turbo}} \\
\midrule
base model & 1.0505 & 6.0282 & 22.8809 & 21.5378 & 5.4020 & 0.5321 & 0.5375 & 5.3853 & 28.375 & 5.6965 & 0.1368 & 5.4256 \\
Best-of-$K$ & 1.6932 & 6.1148 & 23.2829 & 23.1925 & 5.5747 & 0.8949 & 0.7232 & 5.3717 & 2.9 & 5.7096 & 0.4784 & 5.3303 \\
ReNO & 1.7378 & 6.0395 & 22.9585 & \textbf{25.1031} & 5.5721 & 0.7755 & - & - & - & - & - & - \\
ORIGEN & \textbf{1.8198} & 5.1640 & 21.1789 & 24.2321 & 5.5748 & 0.9286 & - & - & - & - & - & - \\
MoNO & 1.6781 & 6.0558 & 23.0221 & 24.6250 & 5.5579 & 0.7675& - & - & - & - & - & - \\
\midrule
ZO Langevin & 1.3968 & 5.8129 & 22.8452 & 22.2734 & 5.5092 & 0.7402 & 0.6875 & 5.4123 & 6.025 & 5.7078 & 0.3823 & 5.3439 \\
ZO Exponential Map & 1.6452 & 6.0507 & 23.2201 & 22.6230 & 5.5437 & 0.7880 & 0.6673 & 5.3912 & 3.325 & 5.6777 & 0.4239 & \textbf{5.4677} \\
NES & 1.6354 & 6.0795 & 23.0850 & 22.3546 & 5.5050 & 0.7583 & 0.6613 & 5.4006 & 5.8 & 5.6811 & 0.4195 & 5.3516 \\
ZeNOVA (ours) & 1.8078 & \textbf{6.1279} & \textbf{23.3584} & 23.4564 & \textbf{5.5957} & \textbf{1.0329} & \textbf{0.7332} & \textbf{5.4278} & \textbf{0.55} & \textbf{5.7222} & \textbf{0.5114} & 5.3864 \\
\midrule
\multicolumn{13}{c}{\textbf{DMD2}} \\
\midrule
base model & 0.5709 & 5.4908 & 21.4339 & 20.4482 & 5.3677 & 0.3438 & 0.4547 & 5.1387 & 14.575 & 5.1468 & 0.0921 & \textbf{5.3661} \\
Best-of-$K$ & 1.6456 & 5.6970 & 22.1676 & 22.0510 & 5.5511 & 0.8915 & 0.7224 & 5.1387 & 1.85 & 5.1557 & 0.4646 & 5.0526 \\
ReNO & 1.3789 & 5.4232 & 20.9871 & \textbf{23.4025} & 5.4159 & 0.5440 & - & - & - & - & - & - \\
ORIGEN & 1.2708 & 4.9731 & 19.6959 & 21.8586 & 5.3021 & 0.2489 & - & - & - & - & - & - \\
MoNO & 1.2816 & 5.3627 & 21.0667 &23.1507 & 5.4140 & 0.3623 & - & - & - & - & - & - \\
\midrule
ZO Langevin & 1.4898 & 5.4877 & 21.8468 & 21.8061 & 5.4043 & 0.7811 & 0.6597 & 5.1388 & 9.775 & 4.6922 & 0.3961 & 5.3055 \\
ZO Exponential Map & 1.4100 & 5.5935 & 21.9041 & 21.9611 & 5.5548 & 0.6782 & 0.6383 & 5.1436 & 1.35 & \textbf{5.2610} & 0.2351 & 5.3417 \\
NES & 1.2996 & 5.5763 & 21.7994 & 21.6149 & 5.5007 & 0.6193 & 0.6251 & 5.1531 & 1.975 & 5.0784 & 0.2073 & 5.3637 \\
ZeNOVA (ours) & \textbf{1.7683} & \textbf{5.7210} & \textbf{22.4022} & 22.6398 & \textbf{5.6130} & \textbf{0.9187} & \textbf{0.7275} & \textbf{5.1703} & \textbf{0.075} & 5.1902 & \textbf{0.5237} & 5.1575 \\
\bottomrule
\end{tabular}%
}
\vspace{-3mm}
\end{table}

\textbf{Text-to-Image Generation.}
We evaluate initial noise alignment for pretrained text-to-image generation with SDXL-turbo~\citep{sauer2024adversarial} and DMD2~\citep{yin2024improved} on five tasks: ImageReward~\citep{xu2024imagereward}, PickScore~\citep{kirstain2023pickapic}, GenEval~\citep{ghosh2023geneval}, Query object counting~\citep{qian2025t2icount}, and OCR~\citep{cui2025paddleocr}. The generator parameters remain fixed, and alignment is performed by updating the initial noise. ImageReward and PickScore provide differentiable preference rewards, while GenEval, Counting, and OCR evaluate compositional alignment, object counting, and text rendering through black-box reward evaluations. We additionally report aesthetic score~\citep{schuhmann2022aesthetic} as a separate evaluation metric to assess how task-specific alignment affects visual quality. See Appendix~\ref{subsec:image_setting} for the detailed settings on the reward functions and utilized text prompt datasets.

We include uncontrolled generation and Best-of-K, which selects the best sample among $K$ independently drawn initial noises. The zeroth-order baselines comprise 
ZO Langevin, exponential-map updates with a ZO gradient estimator, and Natural Evolution Strategies (NES)~\citep{wierstra2014natural}. For differentiable rewards, we also include ReNO~\citep{eyring2024reno}, ORIGEN~\citep{min2026origen}, and MoNO~\citep{mono} as baselines that utilize backpropagation through the generator and reward model. 

Table~\ref{tab:main_results} shows that ZeNOVA improves over uncontrolled generation, Best-of-$K$, and every reported zeroth-order optimization baseline on all five task metrics for both generative models. 
Specifically, in black-box reward tasks,  ZeNOVA achieves performance gains of $+0.046$ on GenEval and $+0.088$ on OCR from the best zeroth-order baseline, and reduces MAE of counting by around $40\%$.
These results show improvements over the alternatives on objectives that do not provide gradients.
Note that on ImageReward and PickScore where the reward gradient is accessible, gradient-based algorithms of ReNO and ORIGEN often attain higher reward than ours. Here, leveraging the fact that ImageReward and PickScore both measure the same aspect of human preference, we evaluated both runs with the reward not employed during optimization as a validation metric. We observe that gradient-based methods are prone to reward hacking, allowing our method to achieve the highest validation metric.

In terms of image quality, ZeNOVA achieved the highest Aesthetic score across various scenarios, while some unstable zeroth-order methods and first-order methods prone to reward hacking exhibited significant drops. While this is partly because some optimized rewards are already associated with image quality, but the results in other cases suggest that ZeNOVA properly preserves the prior distribution of the original generative model's input.


\begin{figure}[t!]
    \centering
    \includegraphics[width=\linewidth]{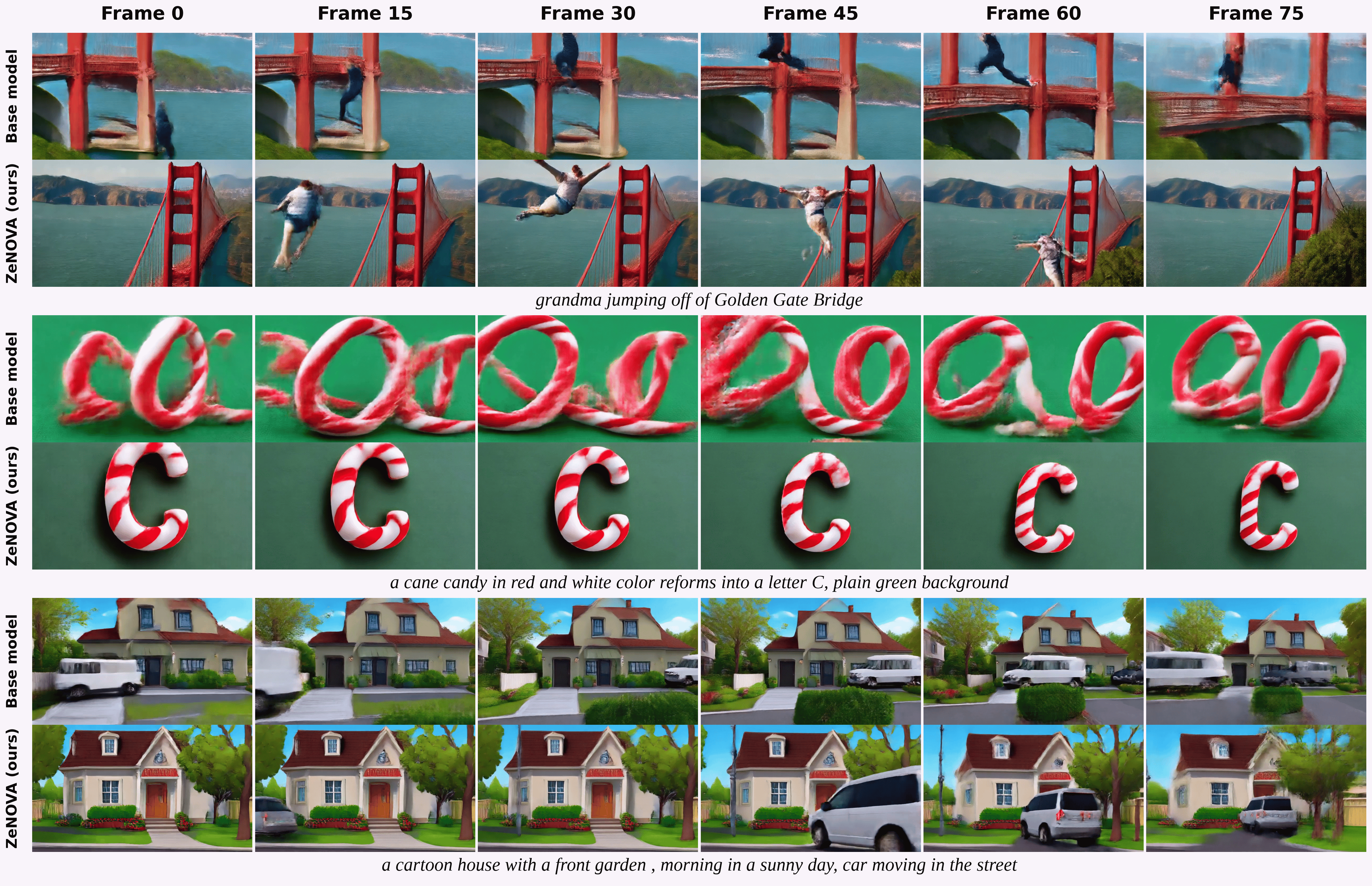}
    \caption{Qualitative video results from rCM with ZeNOVA, optimizing VideoAlign scores.}
    \label{fig:qual_video}
    \vspace{-5mm}
\end{figure}

\textbf{Text-to-Video Generation.} Due to the increased data dimensionality introduced by the additional temporal axis, a growing interest for models that generate video frames or chunks in a few steps was emerged.
We further applied ZeNOVA to a few-step distilled text-to-video flow-matching model rCM~\citep{rcm}. Following~\citet{xue2025dancegrpo}, we employed three VideoAlign~\citep{videoalign} benchmark score for Visual Quality (VQ), Motion Quality (MQ), and Text Alignment (TA). Text prompts from VidProM~\citep{wang2024vidprom} dataset were used. More details can be found in Appendix~\ref{subsec:video_setting}.

\begin{wraptable}{r}{0.54\textwidth}
\vspace{-3mm}
\centering
\caption{Quantitative performance on text-to-video models. \textbf{Bold}: Best performance.}
\label{tab:video}
\small
\resizebox{\linewidth}{!}{%
\begin{tabular}{l|cccc}
\toprule
   & \multicolumn{4}{c}{VideoAlign} \\
\cmidrule(lr){2-5}Method   & VQ & MQ & TA & All \\
\midrule
base model & 3.4854 & 1.3684 & 2.1731 & 6.1997 \\
ZeNOVA & \textbf{6.4464} & \textbf{3.3124} & \textbf{3.9088} & \textbf{10.8146} \\
\bottomrule
\end{tabular}
}
\vspace{-2mm}
\end{wraptable}

Table~\ref{tab:video} and Figure~\ref{fig:qual_video} show that ZeNOVA improves over uncontrolled generation in all four optimization settings. These gains demonstrate the scalability of the proposed initial-noise alignment method for a task with much more dimensionality and complexity, supporting both individual reward objectives and their combined optimization without modifying the pretrained generator.

\section{Discussion}
\textbf{Effect of Particle Count per Iteration.}
Zeroth-order methods require reward evaluations for a fixed number($N$) of particles per iteration to estimate each method's intended update direction of the initial noise; as $N$ increases, this estimate approaches the theoretical ground truth and the performance converges. In practice, since the number of available reward evaluations is finite, the performance behavior with a small number of particles becomes a critical factor in the practicality of the algorithm.

\begin{figure}
  \begin{minipage}[t]{.49\linewidth}
    \centering
    \includegraphics[width=\linewidth]{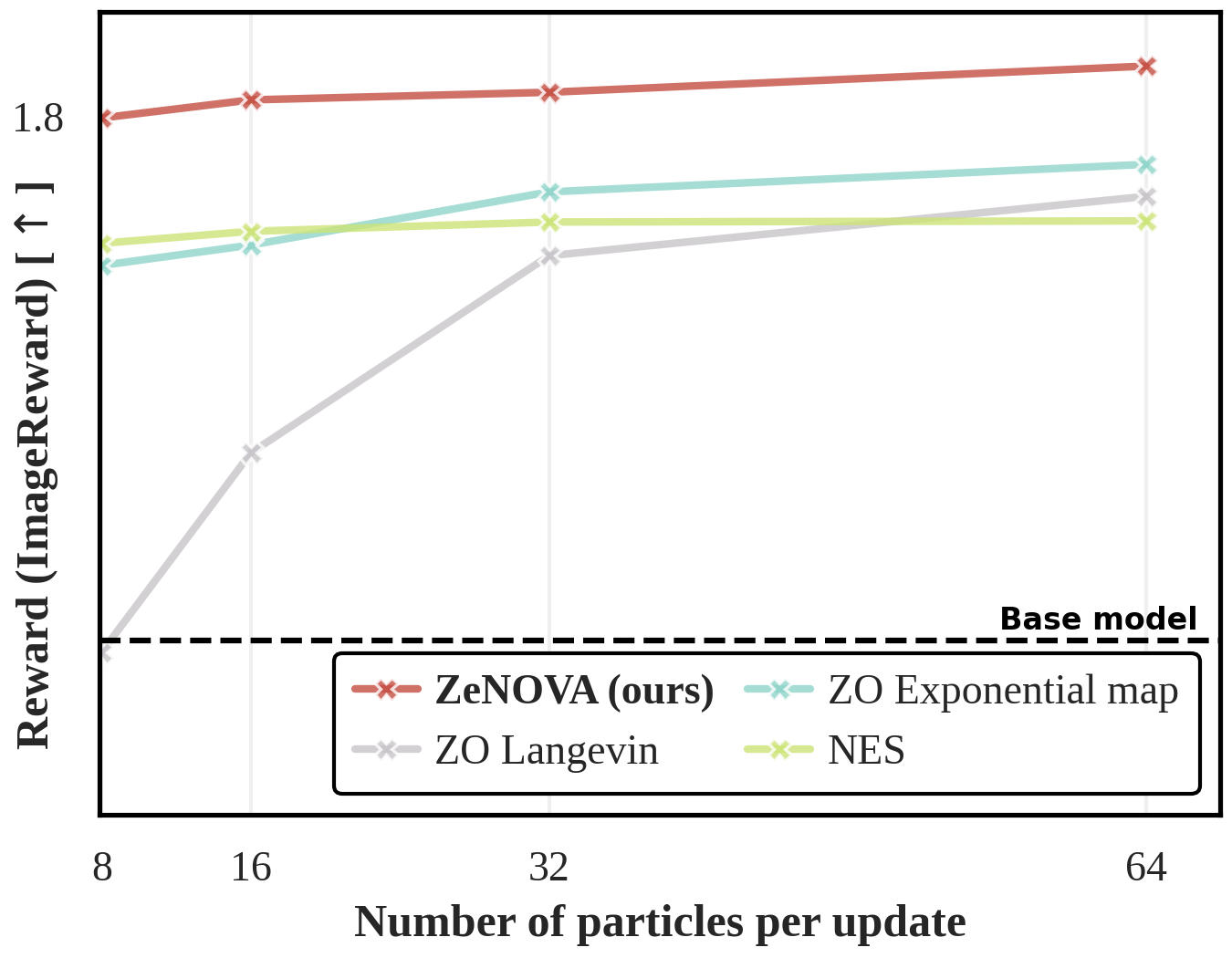}
    \captionof{figure}{Effect of particle count per iteration for zeroth-order methods.}\label{fig:scaling}%
  \end{minipage}\hfill
  \begin{minipage}[t]{.49\linewidth}
    \centering
    \includegraphics[width=\linewidth]{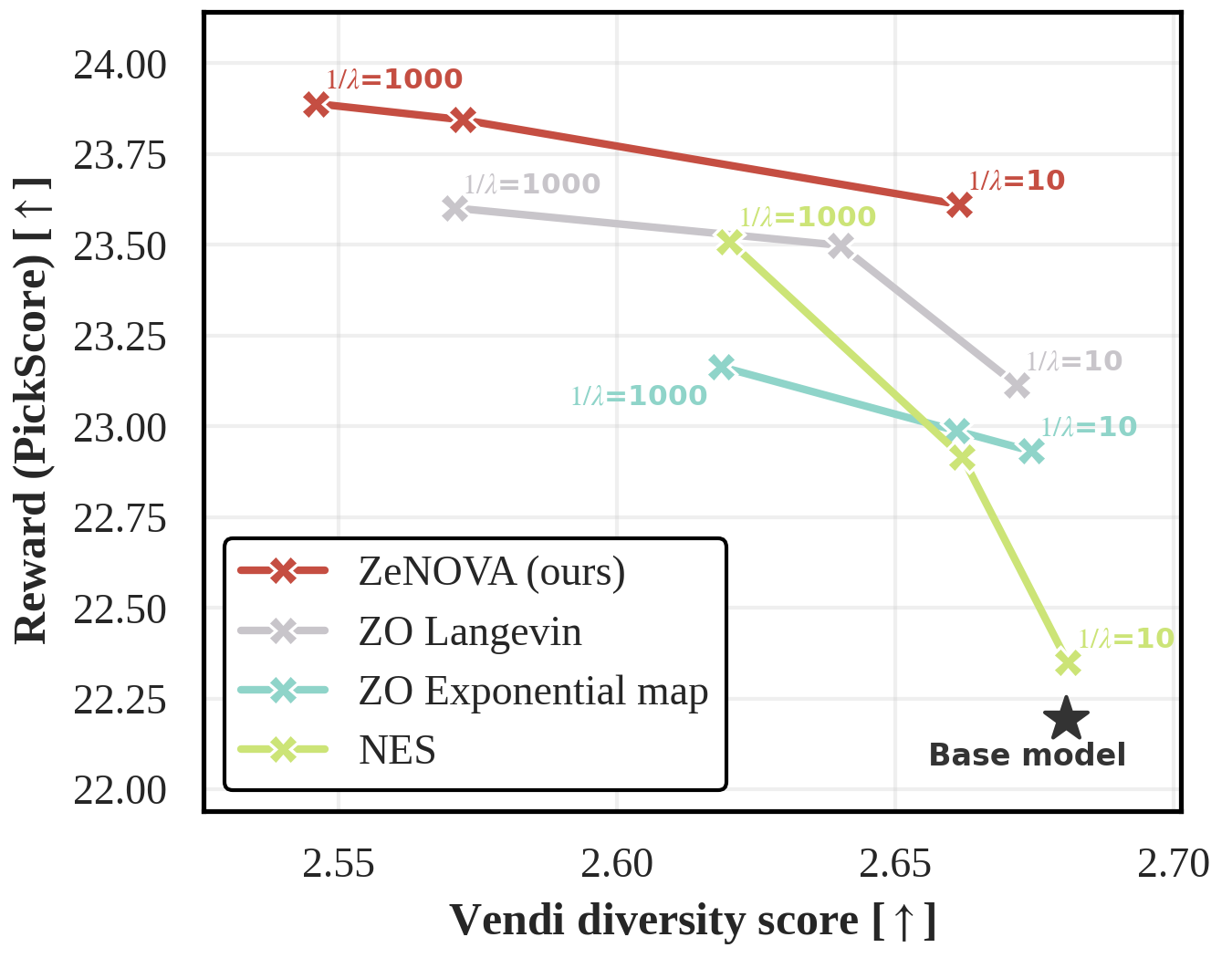}
    \captionof{figure}{Pareto curve between PickScore and Vendi diversity at different guidance scales.}\label{fig:diversity}%
  \end{minipage}
  \vspace{-5mm}
\end{figure}

Figure~\ref{fig:scaling} compares ZeNOVA with Cartesian Langevin, exponential-map updates, and NES with ImageReward on HPDv2 prompts, holding the number of iterations fixed at $K=25$. Reward increases with $N$ for every evaluated method, while ZeNOVA achieves the highest reward at every reported particle count and exhibits a small degradation as the particle budget decreases; reducing $N$ from $64$ to $8$ lowers ZeNOVA's ImageReward by only $0.0743$. Notably, ZeNOVA with only $8$ particles outperforms all three baselines with $64$ particles. These results demonstrate that ZeNOVA preserves strong alignment under limited reward-query budgets and are consistent with the stabilizing role of hyperspherical updates under noisy zeroth-order guidance.

\textbf{Reward-Diversity Trade-off.}
ZeNOVA and the baselines have a guidance scale hyperparameter $\lambda$ to control the trade-off between prior preservation and reward.
Figure~\ref{fig:diversity} plots the Pareto curve between PickScore reward and Vendi diversity~\citep{friedman2023vendi} of 64 images per prompt, for different methods at three guidance scales. 
For all methods, increasing $1/\lambda$ improves the average reward while diversity decreases, indicating that stronger guidance yields reward gains by making the output distribution focus on a narrow high-reward region. Meanwhile, our approach achieves a dominant Pareto front located at the most upright of the plot, indicating higher diversity in its output for a fixed level of reward alignment.

\begin{wraptable}{r}{0.58\textwidth}
\vspace{-3mm}
\centering
\caption{Ablation study of the components of ZeNOVA.}
\label{tab:discussion_ablation}
\small
\resizebox{\linewidth}{!}{%
\begin{tabular}{lccc}
\toprule
  Variants& ImageReward & PickScore & OCR \\
\midrule
 ZeNOVA& \textbf{1.8078} & \textbf{23.4564} & \textbf{0.5114} \\
 - MH jumping & 1.7262 & 23.1867 & 0.4412 \\
 $\quad$- Value Annealing& 1.6984 & 23.2960 & 0.3756 \\
 $\qquad$- Hyperspherical update& 1.3968 & 22.2734 & 0.3823 \\
\bottomrule
\end{tabular}
}
\vspace{-2mm}
\end{wraptable}

\textbf{Ablation Study.}
We've conducted an ablation study on more detailed design choices of the proposed ZeNOVA in Table~\ref{tab:discussion_ablation} to analyze and emphasize their role. 
It is shown that all three components contributed to the method's performance and stability. 
We note that introducing the annealed soft value function yielded marginal or no performance improvements on ImageReward and PickScore, but led to a substantial improvement on OCR reward; the reward landscapes of the former two rewards are continuous with many high-reward modes, allowing sufficiently high rewards to be achieved using only local reward gradient information around the initial point. In the more challenging black-box reward of OCR, the value function allows the method to access a more global and meaningful reward gradient signal.

\vspace{-2mm}
\section{Conclusion}
We introduced ZeNOVA, a stable and efficient initial-noise optimization framework for generative models through initial-noise optimization. Guided by a reward-tilted sampling objective, ZeNOVA combines annealed soft-value guidance to extract useful signals from irregular rewards, where several crucial challenges are adequately resolved to enable stability and global exploration.
Experimental results on various scenarios demonstrate that ZeNOVA consistently achieves higher rewards than the zeroth-order baselines and retains strong performance with less computational cost.

\subsection*{AI use statement}

In this work, we used generative AI tools to [assist with translation]. 
We have not used generative AI tools to [help develop theoretical models or conceptual frameworks, formulate mathematical claims, provide critical ingredients for proving mathematical claims, assist in the writing of proofs, propose or refine hypotheses, design or provide feedback on research methodology or experiments, implement methods, clean and reformat dataset, interpret results], 
and [generate synthetic data sets, support qualitative and thematic data analysis] are not applicable to this work. 
Additionally, we used generative AI tools to [create or edit software code, edit a research paper to improve readability]. We have reviewed all AI-assisted work. [All authors reviewed and refined LLM-generated drafts and code snippets before using them]. We take responsibility for the final content of this work, including text, claims or artifacts produced with the aid of generative AI.


\subsection*{Ethics statement}


This paper presents work on image generative models, and we acknowledge that generative models can produce misleading, biased, or not-safe-for-work visual content, which may amplify misinformation, prejudice, or privacy violations.

\subsection*{Reproducibility statement}

For theoretical results, clear explanations of any assumptions and a complete proof of the claims are included in the Appendix.
Any datasets used in the experiments are open access; a complete description of the data preparation are provided in the Appendix.



\bibliography{iclr2027_conference}
\bibliographystyle{iclr2027_conference}

\clearpage
\appendix
\section{Theoretical Results}

\subsection{Proof of Proposition 1} \label{proof:prop1}
\OUInterpolation*
\begin{proof} 
Starting from the defition of the soft value function,
\begin{equation}
\begin{aligned}
    V_\sigma(\vz) &:=\lambda\log\mathbb{E}_{\vz'\sim q_\sigma(\cdot\mid\vz)}
    \left[e^{R(\vz')/\lambda}\right] \\
    \Leftrightarrow\exp\left(\frac{V_\sigma(\vz)}{\lambda}\right)
    &=\exp\!\left(\log\mathbb{E}_{\vz'\sim q_\sigma(\cdot\mid\vz)}
    \left[e^{R(\vz')/\lambda}\right]\right)\\
    &=\mathbb{E}_{\vz'\sim q_\sigma(\cdot\mid\vz)}
    \left[e^{R(\vz')/\lambda}\right]\\
    &=\int q_\sigma(\vz'\mid\vz)e^{R(\vz')/\lambda}\,d\vz'
\end{aligned}
\end{equation}
Substituting this into~\eqref{eq:intermediate_target} gives
\begin{equation}
\begin{aligned}
    p_\sigma^\star(\vz) &\propto p_0(\vz) \exp\left(\frac{V_\sigma(\vz)}{\lambda}\right) \\
    &= \int p_0(\vz)q_\sigma(\vz'\mid\vz)e^{R(\vz')/\lambda}\,d\vz'
\end{aligned}
\end{equation}

Here, since OU kernel is reversible with respect to $p_0$, we have $p_0(\vz)q_\sigma(\vz'\mid\vz)=p_0(\vz')q_\sigma(\vz\mid\vz')$. Thus,
\begin{equation}
\begin{aligned}
    p_\sigma^\star(\vz) &\propto \int p_0(\vz')q_\sigma(\vz\mid\vz')e^{R(\vz')/\lambda}\,d\vz'\\
    &\propto \int q_\sigma(\vz\mid\vz')p^\star(\vz')\,d\vz'.
\end{aligned}
\end{equation}
\end{proof}

\subsection{Proof of Proposition 2} \label{proof:prop2}
\MHJumping*
\begin{proof}
    To construct a Markov chain for sampling from an unnormalized density $\widetilde p(\vz)$, Metropolis-Hastings algorithm draws a candidate $\vz'\sim q(\cdot\mid\vz)$ and accepts it with probability
\begin{equation}
    \alpha_{\mathrm{MH}}(\vz,\vz')
    =\min\!\left\{1,
    \frac{\widetilde p(\vz')q(\vz\mid\vz')}
    {\widetilde p(\vz)q(\vz'\mid\vz)}\right\},
\end{equation}
retaining the current state if the proposal is rejected.
\begin{equation}
    p^\star(\vz)\propto p_0(\vz)\exp\!\left(\frac{R(\vz)}{\lambda}\right),
\end{equation}
To apply Metropolis-Hastings to an unnormalized reward-tilted distribution $\widetilde p(\vz)=p_0(\vz)e^{R(\vz)/\lambda}$, on can choose the independent Gaussian prior proposal and reward-based acceptance probability as
\begin{equation}
    q(\vz'\mid\vz)=p_0(\vz')=\gN(\vz';0,\rmI).
\end{equation}

By this selection, Metropolis-Hastings acceptance ratio for $\alpha(\vz,\vz')$ becomes
\begin{equation}
\begin{aligned}
    \frac{\widetilde p(\vz')q(\vz\mid\vz')}
    {\widetilde p(\vz)q(\vz'\mid\vz)}
    =\frac{p_0(\vz')e^{R(\vz')/\lambda}p_0(\vz)}
    {p_0(\vz)e^{R(\vz)/\lambda}p_0(\vz')}
    =\exp\!\left(\frac{R(\vz')-R(\vz)}{\lambda}\right),
\end{aligned}
\end{equation}
which reconstructs the claimed algorithm to generate samples from $p^\star(\vz)\propto p_0(\vz)e^{R(\vz)/\lambda}$.
\end{proof}

\subsection{Derivation of Zeroth-order Value Gradient Estimator}
\label{app:value_gradient}

Stochastic Optimal Control theory describes the optimal control term injected into a given reference process to maximize an entropy-regularized objective and relates it to the soft value function. 
Specifically, in our OU reference process over a timestep $t=0$ to  $t=h$,
\begin{equation}
    d\vz_s=-\vz_s\,ds+\sqrt{2}\,d\vw_s,
    \qquad \vz_0=\vz.
\end{equation}
the optimal control $\vu^\star(\vz,0)$ is defined to maximize the following objectives under the controlled process.

\begin{align}
      &\max\mathbb{E}_{u}\left[R(\vz_h^u) -\frac{\lambda}{2}\int_0^h\|\vu_s\|^2ds\middle|\,\vz_0^u=\vz\right] \\
    &\text{s.t. } d\vz_s^u=-\vz_s^uds + \sqrt{2}\bigl(\vu_s\,ds+d\vw_s\bigr), \vz_0^u=\vz
\end{align}

With $\sigma^2:=1-e^{-2h}$, Hamilton-Jacobi-Bellman equation~\citep{textbook_soc} states that  the optimal control at $t=0$ $\vu^\star(\vz,0)$ satisfies
\begin{equation}
    \vu^\star(\vz,0)
    =\frac{\sqrt{2}}{\lambda}\nabla_{\vz}V_\sigma(\vz).
\end{equation}
Furthermore, the path-integral control identity~\citep{pathintegral} gives
\begin{equation} \label{eq:pic_value_gradient}
    \vu^\star(\vz,0) = \lim_{dt\to0}\frac{\mathbb{E}_{\text{ref}}[e^{R(\vz_h)/\lambda}d\vw_0\mid\vz_0=\vz]}{dt\mathbb{E}_{\text{ref}}[e^{R(\vz_h)/\lambda}\mid\vz_0=\vz]}.
\end{equation}
Here, $d\vw_0$ and $dt$ denote the injected Brownian motion of a single trajectory at $t=0$ and its corresponding timestep increment.
All expectations in this identity are under the uncontrolled reference process. The terminal reward is evaluated at $\vz_h$.

Now, we reformulate Eq.~\eqref{eq:pic_value_gradient} into terms that are more easily tractable. First, we can get the exact OU solution for $\vz_h$ as
\begin{equation}
    \vz_h=\sqrt{1-\sigma^2}\vz+\sigma\veps,
    \qquad \veps\sim\gN(0,\rmI).
\end{equation}

Our goal is to rewrite the numerator of Eq.~\eqref{eq:pic_value_gradient}, which contains both the terminal reward and the initial Brownian increment $d\vw_0$, using only the terminal Gaussian sample $\vz_h$. 
Here, note that the reward weight depends only on $\vz_h$. 
Therefore, applying the tower property to the numerator of Eq.~\eqref{eq:pic_value_gradient} yields
\begin{align}
    \mathbb{E}[e^{R(\vz_h)/\lambda}d\vw_0]
    &=\mathbb{E}\!\left[\mathbb{E}[e^{R(\vz_h)/\lambda}d\vw_0\mid\vz_h]\right]\\
    &=\mathbb{E}\!\left[e^{R(\vz_h)/\lambda}\mathbb{E}[d\vw_0\mid\vz_h]\right]
\end{align}

For $\mathbb{E}[d\vw_0\mid\vz_h]$, we use the conditional mean formula jointly Gaussian variables:
\begin{equation}
    \mathbb{E}[X\mid Y]
    =\mathbb{E}[X]+\operatorname{Cov}(X,Y)
    \operatorname{Var}(Y)^{-1}(Y-\mathbb{E}[Y]).
\end{equation}
Applying this formula with $X=d\vw_0$ and $Y=\vz_h$, and using $\mathbb{E}[d\vw_0]=0$ and $\mathbb{E}[\vz_h]=\sqrt{1-\sigma^2}\vz$, gives
\begin{equation}
    \mathbb{E}[d\vw_0\mid\vz_h]
    =0+\operatorname{Cov}(d\vw_0,\vz_h)
      \operatorname{Var}(\vz_h)^{-1}(\vz_h-\sqrt{1-\sigma^2}\vz)
\end{equation}

Here, we have a stochastic-integral representation of the terminal state of the OU process as $\vz_h=e^{-h}\vz+\sqrt{2}\int_0^h e^{-(h-s)}d\vw_s$. Both $d\vw_0$ and $\vz_h$ are thus linear functions of Gaussian noise and are jointly Gaussian. The initial noise contributes to $\vz_h$ through the interval $[0,dt]$, while later increments are independent of it. Their cross-covariance and terminal covariance are consequently
\begin{equation}
\begin{aligned}
    \operatorname{Cov}(d\vw_0,\vz_h)
    &=\sqrt{2}\int_0^{dt}e^{-(h-s)}ds\rmI
      =\sqrt{2}\sqrt{1-\sigma^2}(e^{dt}-1)\rmI,\\
    \operatorname{Var}(\vz_h)&=\sigma^2\rmI.
\end{aligned}
\end{equation}

Now we can complete the formula:

\begin{align}
    \mathbb{E}[d\vw_0\mid\vz_h]&=\frac{\sqrt{2}\sqrt{1-\sigma^2}(e^{dt}-1)}{\sigma^2}
      (\vz_h-\sqrt{1-\sigma^2}\vz) \\
    \mathbb{E}[e^{R(\vz_h)/\lambda}d\vw_0]&=\frac{\sqrt{2}\sqrt{1-\sigma^2}(e^{dt}-1)}{\sigma^2}
      \mathbb{E}\!\left[e^{R(\vz_h)/\lambda}
      (\vz_h-\sqrt{1-\sigma^2}\vz)\right].
\end{align}
Substituting it into Eq.~\eqref{eq:pic_value_gradient} gives
\begin{equation}
\begin{aligned}
    \vu^\star(\vz,0)
    &=\lim_{dt\to0}
      \frac{\sqrt{2}\sqrt{1-\sigma^2}}{\sigma^2}
      \frac{e^{dt}-1}{dt}
      \frac{\mathbb{E}[e^{R(\vz_h)/\lambda}(\vz_h-\sqrt{1-\sigma^2}\vz)]}
      {\mathbb{E}[e^{R(\vz_h)/\lambda}]}\\
    &=\frac{\sqrt{2}\sqrt{1-\sigma^2}}{\sigma^2}
      \frac{\mathbb{E}[e^{R(\vz_h)/\lambda}(\vz_h-\sqrt{1-\sigma^2}\vz)]}
      {\mathbb{E}[e^{R(\vz_h)/\lambda}]}
\end{aligned}
\end{equation}
Finally, we substitute $\vz_h=\sqrt{1-\sigma^2}\vz+\sigma\veps$:
\begin{equation}
    \vu^\star(\vz,0)
    =\frac{\sqrt{2}\sqrt{1-\sigma^2}}{\sigma}
    \frac{\mathbb{E}_{\veps}[e^{R(\sqrt{1-\sigma^2}\vz+\sigma\veps)/\lambda}\veps]}
    {\mathbb{E}_{\veps}[e^{R(\sqrt{1-\sigma^2}\vz+\sigma\veps)/\lambda}]}.
\end{equation}

The resulting expression depends only on a Gaussian perturbation and its terminal reward, so no full OU trajectory or initial Brownian increment needs to be sampled.
Multiplying by $\lambda/\sqrt{2}$ recovers~\eqref{eq:value_grad_exact}.

\subsection{Derivation of Hyperspherical Langevin Dynamics}
We decompose a sample $\vz$ from a standard Gaussian as its norm $\rho$ and its unit direction vector $\vu$.
\begin{equation}
\label{eq:polar_decomp}
    \vz=\rho\vu,
    \qquad
    \rho:=\|\vz\|_2>0,
    \qquad
    \vu\in\mathbb{S}^{d-1}
\end{equation}
When we convert the soft-value-tilted distribtuion with $\rho$ and $\vu$ as well, Eq.~\eqref{eq:intermediate_target} becomes
\begin{equation}
\label{eq:polar_target}
    \widetilde p_\sigma^\star(\rho,\vu)
    \propto
    \rho^{d-1}
    \exp\left(
        -\frac{\rho^2}{2}
        +\frac{V_\sigma(\rho\vu)}{\lambda}
    \right).
\end{equation}
This expression exposes two qualitatively different directions: the angular component moves along the Gaussian shell, whereas the radial component controls how far the sample lies from the origin.
Rather than allowing a noisy $d$-dimensional update to perturb both components implicitly, we construct Langevin dynamics for them explicitly.

\textbf{Angular Langevin on the sphere.}
With a fixed radial $\rho$, the score function of the angular conditional distribution induced by~\eqref{eq:polar_target} is
\begin{equation}
\label{eq:angular_score}
    \nabla_{\mathbb{S}}\log \widetilde p_\sigma^\star(\rho,\vu)
    = \rmP_{\vu}^{\perp}\left(\frac{\rho}{\lambda}
    \vg_\sigma(\rho\vu)\right),
\end{equation}

where $\vg_\sigma(\vz):=\nabla_{\vz}V_\sigma(\vz)$ and
$\rmP_{\vu}^{\perp}:=\rmI-\vu\vu^\top$
denotes the projection onto the tangent space of $\vu$.

Since $\vu$ must lie on a unit sphere, we use intrinsic spherical Langevin dynamics
\begin{equation}
\label{eq:spherical_sde}
    d\vu_s
    = \nabla_{\mathbb{S}}\log \widetilde p_\sigma^\star(\rho_s,\vu_s)\,ds
    + \sqrt{2}\,d\vw_s^{\mathbb{S}},
\end{equation}
where $\vw_s^{\mathbb{S}}$ denotes Brownian motion also lying on the unit sphere.
A practical Euler step implementation of this with step size $\eta_A$ can be a tangent Gaussian followed by the spherical exponential map~\citep{lee2018introduction}:
\begin{align}
    \vv_k
    &= \eta_{A}
    \frac{\rho_k}{\lambda}
    \rmP_{\vu_k}^{\perp}\widehat{\vg}_{\sigma_k}(\vz_k)
    + \sqrt{2\eta_{A}}\,
    \rmP_{\vu_k}^{\perp}\bm{\xi}_k,\\
    \vu_{k+1}
    &= \operatorname{Exp}_{\vu_k}(\vv_k),
    \qquad \bm{\xi}_k\sim\gN(0,\rmI),
\end{align}
where $\text{Exp}_{\vu}(\vv)$ implements the rotation of $\vu$ to the origin toward the direction of $\vv$ as
\begin{equation}
    \operatorname{Exp}_{\vu}(\vv)
    = \cos(\|\vv\|_2)\vu
    + \sin(\|\vv\|_2)\frac{\vv}{\|\vv\|_2}.
\end{equation}
with the continuous extension $\operatorname{Exp}_{\vu}(\vzero)=\vu$.
By construction, both the estimated reward drift and the Langevin noise remain tangent to the sphere.

\textbf{Radial Langevin.}
For soft-value-tilted distribution, with a fixed angular state $\vu$, the score function of the radial conditional induced by~\eqref{eq:polar_target} is

\begin{equation}
\label{eq:radial_score}
    \frac{d}{d\rho}\log\widetilde p_\sigma^\star(\rho\mid\vu)
    = \frac{d-1}{\rho}-\rho
    + \frac{1}{\lambda}\vu^\top\vg_\sigma(\rho\vu).
\end{equation}
Note that the first two terms are equal to the score function of the standard Gaussian's radial distribution $\rho\sim\chi_d$.
We therefore directly apply one-dimensional Langevin dynamics to the radius,
\begin{equation}
\label{eq:radial_sde}
    d\rho_s
    = \left(
        \frac{d-1}{\rho_s}-\rho_s
        +\frac{1}{\lambda}\vu_s^\top\vg_\sigma(\rho_s\vu_s)
    \right)ds
    +\sqrt{2}\,dw_s.
\end{equation}
A practical Euler step with step size $\eta_{R}$ is
\begin{equation}
    \rho_{k+1}
    = \rho_k
    + \eta_{R}
    \left(
        \frac{d-1}{\rho_k}-\rho_k
        +\frac{1}{\lambda}\vu_k^\top\vg_{\sigma_k}(\vz_k)
    \right)
    + \sqrt{2\eta_{R}}\,\xi_k,
    \qquad \xi_k\sim\gN(0,1).
\end{equation}
The first two terms provide the exact Gaussian radial bias, while the scalar projection $\vu^\top\widehat{\vg}_\sigma$ captures the reward-induced radial preference. Thus the radius remains stochastic and reward-adaptive, unlike several empirical regularizations like explicit norm constraints.

\section{Implementation Details}\label{sec:detail}
\subsection{Detailed Algorithm of ZeNOVA} \label{sec:algorithm}
Algorithm~\ref{alg:zenova} describes a detailed algorithm of the proposed ZeNOVA with Annealed Soft Value, Hyperspherical Langevin, and MH Jumping. Note that the reciprocal of the temperature $1/\lambda$ can be considered as an effective reward guidance scale, making the method more focused on reward maximization than prior preservation.

\begin{algorithm}[t]
\caption{ZeNOVA with Annealed Soft Value, Hyperspherical Langevin, and MH Jumping}
\label{alg:zenova}
\begin{algorithmic}[1]
\Require Generator $G_\theta$, reward $r$, temperature $\lambda$, horizons $1>\sigma_1>\cdots>\sigma_{K}>0$, number of particles $N$, number of MH jumping $N_{\text{MH}}$, number of iteration $K$, step sizes $\eta_{A},\eta_{R}$
\State $\vz_0\sim\mathcal{N}(0,\mathbf{I})$; $\rho_0\gets\|\vz_0\|_2$; $\vu_0\gets\vz_0/\rho_0$
\For{$k=0,\ldots,K-1$}
    \For{$j=1,\ldots,N_{\text{MH}}$} \Comment{independent MH jumping}
        \State $\vz'\sim p_0$; $a\sim\operatorname{Unif}(0,1)$
        \State \textbf{if }$a<\min\{1,\exp((R(\vz')-R(\vz))/\lambda)\}$\textbf{ then }$\vz_k\gets\vz'$
    \EndFor
    \State $\rho_k\gets\|\vz_k\|_2$; $\vu_k\gets\vz_k/\rho_k$
    \For{$n=1,\ldots,N$} \Comment{parallel reward queries}
        \State $\veps_k^{(n)}\sim\mathcal{N}(0,\mathbf{I})$
        \State $R_k^{(n)}\gets r\!\left(G_\theta(\sqrt{1-\sigma_k^2}\vz_k+\sigma_k\veps_k^{(n)})\right)$
    \EndFor
    \State $w_k^{(n)}\gets\operatorname{softmax}(R_k^{(n)}/\lambda)$
    \State $\widehat{\vg}_k\gets\dfrac{\lambda\sqrt{1-\sigma_k^2}}{\sigma_k}\sum_{n=1}^{N}w_k^{(n)}\veps_k^{(n)}$ \Comment{soft value gradient}
    \State $g_k^{\mathrm{rad}}\gets\vu_k^\top\widehat{\vg}_k$
    \State $\vg_k^{\mathrm{tan}}\gets\widehat{\vg}_k-g_k^{\mathrm{rad}}\vu_k$
    \State $\bm{\xi}_k\sim\mathcal{N}(0,\mathbf{I})$; $\bm{\xi}_k^{\mathrm{tan}}\gets\bm{\xi}_k-(\vu_k^\top\bm{\xi}_k)\vu_k$
    \State $\vu_{k+1}\gets\operatorname{Exp}_{\vu_k}(\eta_{A}\dfrac{\rho_k}{\lambda}\vg_k^{\mathrm{tan}}+\sqrt{2\eta_{A}}\bm{\xi}_k^{\mathrm{tan}})$ \Comment{spherical update}
    \State $\rho_{k+1}\gets\rho_k+\eta_{R}\left[\dfrac{d-1}{\rho_k}-\rho_k+\dfrac{1}{\lambda}g_k^{\mathrm{rad}}\right]+\sqrt{2\eta_{R}}\xi_k\textbf{ where }\xi_k\sim\mathcal{N}(0,1)$ \Comment{radial update}
    \State $\vz_{k+1}\gets\rho_{k+1}\vu_{k+1}$
\EndFor
\State \Return $G_\theta(\vz_K)$
\end{algorithmic}
\end{algorithm}

\subsection{Implementation of Zeroth-Order Baselines}

\textbf{ZO Langevin.} We substitute the zeroth-order gradient estimator $\widehat{\vg}(\vz_k)$ in Eq.~\eqref{eq:local_zo_estimator} into Eq.~\eqref{eq:grad_langevin} and apply an Euler-Maruyama step of size $\eta$:
\begin{equation}
    \vz_{k+1}=\vz_k+\eta\left(-\vz_k+\frac{1}{\lambda}\widehat{\vg}(\vz_k)\right)+\sqrt{2\eta}\,\bm{\xi}_k,\qquad \bm{\xi}_k\sim\gN(0,\rmI).
\end{equation}

\textbf{ZO Exponential Map.} Let $\rho_k=\|\vz_k\|_2$ and $\vu_k=\vz_k/\rho_k$. We project the same estimator onto the tangent space and update the direction through the spherical exponential map with step size $\eta$:
\begin{equation}
    \vg_k^{\mathrm{tan}}=(\rmI-\vu_k\vu_k^\top)\widehat{\vg}(\vz_k),\qquad
    \vu_{k+1}=\operatorname{Exp}_{\vu_k}\!\left(\frac{\eta\rho_k}{\lambda}\vg_k^{\mathrm{tan}}\right),\qquad
    \vz_{k+1}=\rho_k\vu_{k+1}.
\end{equation}

\textbf{Natural Evolution Strategies.}
NES~\citep{wierstra2014natural} maximizes the expected reward $J(\boldsymbol{\theta})=\mathbb{E}_{\vz\sim\pi_{\boldsymbol{\theta}}}[R(\vz)]$ by updating the parameters of a search distribution along its natural gradient:
\begin{equation}
    \boldsymbol{\theta}_{k+1}
    =\boldsymbol{\theta}_k+\eta\mathbf{F}(\boldsymbol{\theta}_k)^{-1}\widehat{\nabla}_{\boldsymbol{\theta}}J,
    \qquad
    \widehat{\nabla}_{\boldsymbol{\theta}}J
    =\frac{1}{N}\sum_{n=1}^{N}(R(\vz^{(n)})-\bar R)\nabla_{\boldsymbol{\theta}}\log\pi_{\boldsymbol{\theta}}(\vz^{(n)}),
\end{equation}
where $\mathbf{F}$ is the Fisher information matrix. Among the possible search distributions, we use an isotropic Gaussian $\pi_{\vz_k}=\gN(\vz_k,\sigma^2\rmI)$ with fixed $\sigma=\texttt{grad\_sigma}$ and optimize only its mean. For $\vz_k^{(n)}=\vz_k+\sigma\veps^{(n)}$, the resulting update is
\begin{equation}
    \vz_{k+1}
    =\vz_k+\eta\sigma^2\widehat{\vg}_k,
    \qquad
    \widehat{\vg}_k
    =\frac{1}{N\sigma}\sum_{n=1}^{N}\left(R(\vz_k+\sigma\veps^{(n)})-\bar R\right)\veps^{(n)},
    \qquad \veps^{(n)}\sim\gN(0,\rmI).
\end{equation}

\subsection{Text-to-Image Generation}
\label{subsec:image_setting}

\textbf{Models.} 
We utilized SDXL-turbo~\citep{sauer2024adversarial} and DMD2~\citep{yin2024improved} without Classifier-Free Guidance scale for our text-to-image generative models. All experiments for zeroth-order methods are conducted using NVIDIA RTX 4090 GPUs with VRAM 24GB. For first-order methods which require backpropagation through reward model and generator, NVIDIA A100 with VRAM 40GB was used.

\textbf{Reward function and text prompts.} ImageReward~\citep{xu2024imagereward} and PickScore~\citep{kirstain2023pickapic} return the human preference of a given image related to the text prompt. GenEval~\citep{ghosh2023geneval} evaluates the alignment between a given image and text prompt by aggregating detection result for text attributes.
For counting reward, we employ an external model T2ICount~\citep{qian2025t2icount} that estimates the count of given query objects in a given image. Here, to implement a challenging black-box reward scenario, we rounded the continuous T2ICount output to a nearest whole number before used by any initial noise optimization methods. 
OCR reward is computed using PaddleOCR~\citep{cui2025paddleocr} according to a normalized Levenshtein distance between the recognized and target text.

For counting reward task, we imported a quantity-aware generation prompt set from~\citet{yoon2026psisampler} which contains text prompts of various objects attached with a certain query number.
For other tasks, we used HPDv2~\citep{wu2023human} prompts for ImageReward, Pick-a-Pic~\citep{kirstain2023pickapic} prompts for PickScore and OCR, GenEval test prompts for GenEval, where we randomly selected 200 prompts.

\textbf{Hyperparameters.}
For the results in Table~\ref{tab:main_results}, we use $N=16$ particles per update for the particle-based zeroth-order optimization methods. The number of iteration was set to $K=25$ for ImageReward and Counting, and $K=50$ for PickScore, GenEval, and OCR. These settings correspond to $N\times K=400$ and $800$ number of reward function evaluations, respectively. 
First-order methods require one function evaluation and its backpropagation per iteration, so we allocated $N\times K$ iterations for each task. Note that ZeNOVA utilized $N_{\text{MH}}$ number of reward evaluations for MH jumping, so only $N-N_{\text{MH}}$ number of particles were used for value gradient estimation unlike other zeroth-order methods. We used $N_{\text{MH}}=4$ for all tasks.

We set the guidance scale to $1/\lambda=100.0$ for the Counting reward and $1/\lambda=1000.0$ for the rest of the reward function, unless specified. Step size for ZeNOVA was set to $\eta_A=0.1$ and $\eta_R=0.03$. For horizon scheduling, we fix $\sigma_1=0.9$ and $\sigma_K=0.1$, then use geometric interpolation to get the intermediate horizons so that the consecutive $\sigma$ have the same ratio.

\subsection{Text-to-Video Generation}
\label{subsec:video_setting}

\textbf{Model.}
We use rCM~\citep{rcm}, a few-step distilled text-to-video flow-matching model as our reference video generator. All experiments are conducted on a single NVIDIA RTX PRO 6000 Blackwell.

\textbf{Reward function and text prompts.} 
Following~\citet{xue2025dancegrpo}, we use VideoAlign~\citep{videoalign} to assess generated videos along three dimensions: Visual Quality (VQ), Motion Quality (MQ), and Text Alignment (TA). We randomly select 30 text prompts from VidProM~\citep{wang2024vidprom}. We conduct four separate optimization settings, using VQ, MQ, TA, or their sum (\emph{i.e.}, All) as the reward objective.

\textbf{Hyperparameters.}
We use $N=32$ particles per update and $K=15$ optimization iterations in all four reward settings, corresponding to $N\times K=480$ particle reward evaluations. The other hyperparameters are set as the same as in the text-to-image generation.



\section{Additional Results}
\label{sec:additional_exp}

Figures~\ref{fig:additional_imagereward}-\ref{fig:additional_video} show qualitative examples with various black-box rewards from different text-to-image generative models(SDXL-Turbo, DMD2) and the text-to-video generative model(rCM) with ZeNOVA.

\begin{figure}[t]
     \centering
     \includegraphics[width=0.95\linewidth]{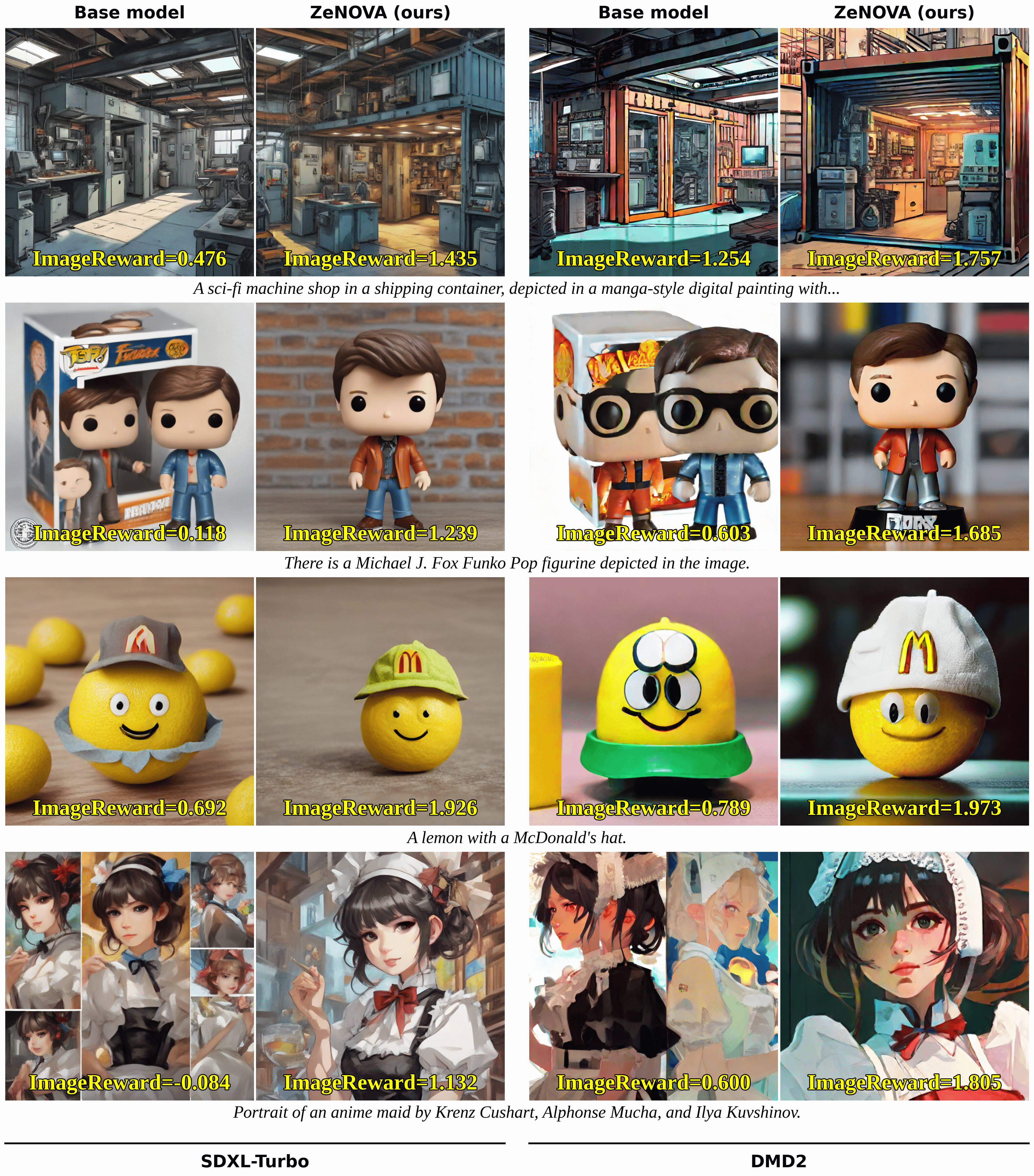}
     \caption{Images generated by SDXL-Turbo and DMD2 with ZeNOVA, using ImageReward as the reward signal.}
     \label{fig:additional_imagereward}
\end{figure}

\begin{figure}[t]
     \centering
     \includegraphics[width=0.95\linewidth]{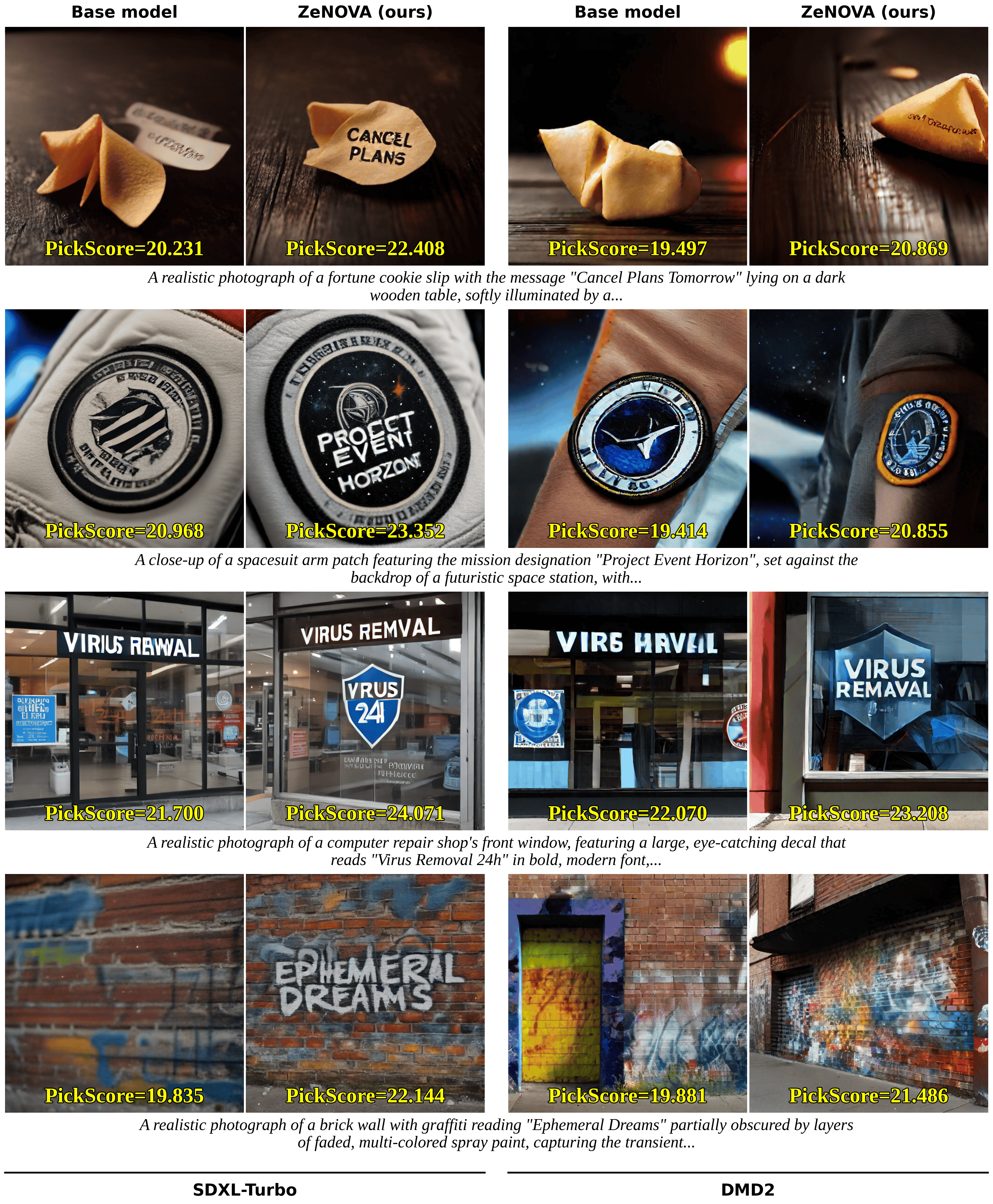}
     \caption{Images generated by SDXL-Turbo and DMD2 with ZeNOVA, using PickScore as the reward signal.}
     \label{fig:additional_pickscore}
\end{figure}

\begin{figure}[t]
     \centering
     \includegraphics[width=0.95\linewidth]{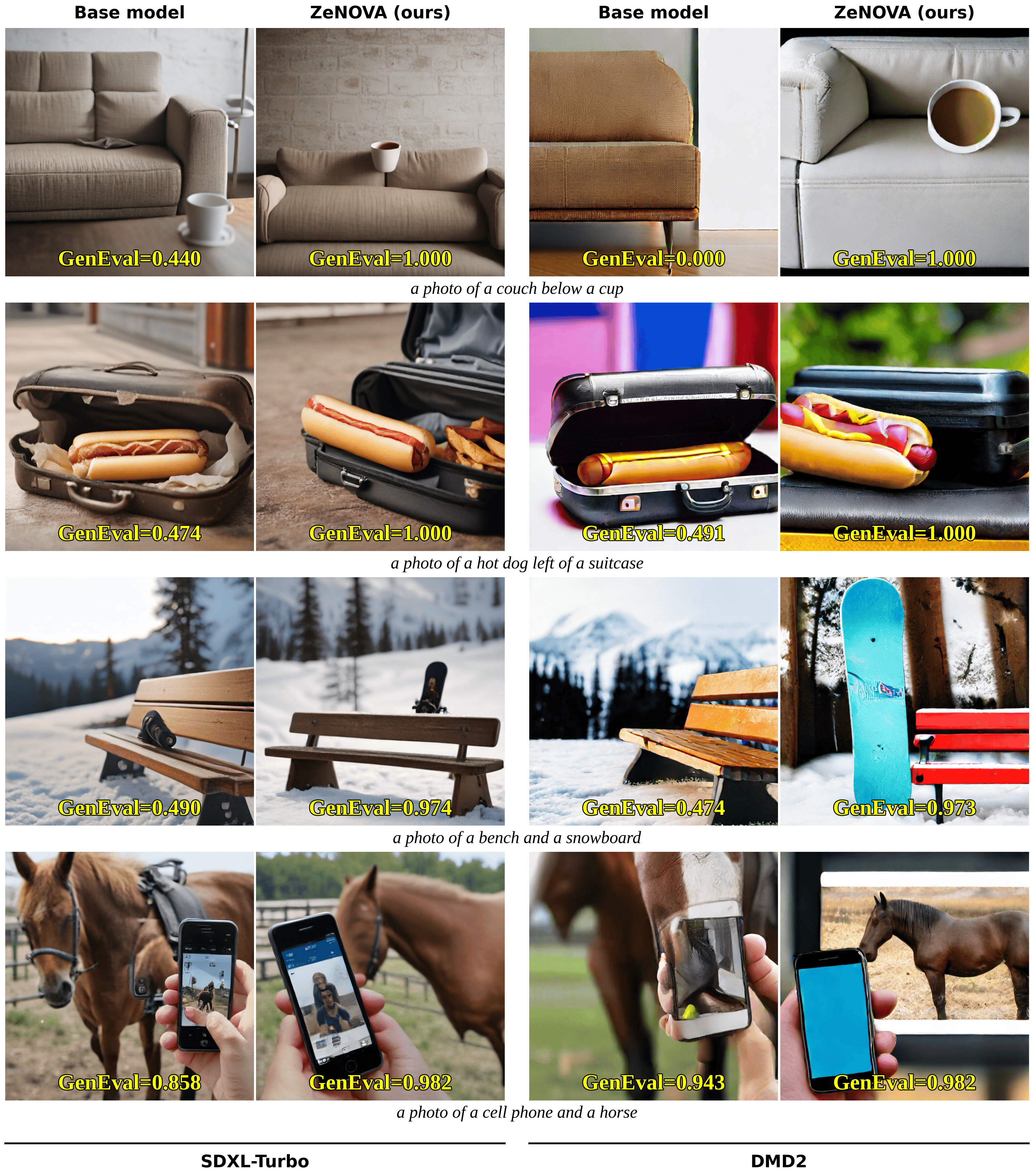}
     \caption{Images generated by SDXL-Turbo and DMD2 with ZeNOVA, using GenEval as the reward signal.}
     \label{fig:additional_geneval}
\end{figure}

\begin{figure}[t]
     \centering
     \includegraphics[width=0.95\linewidth]{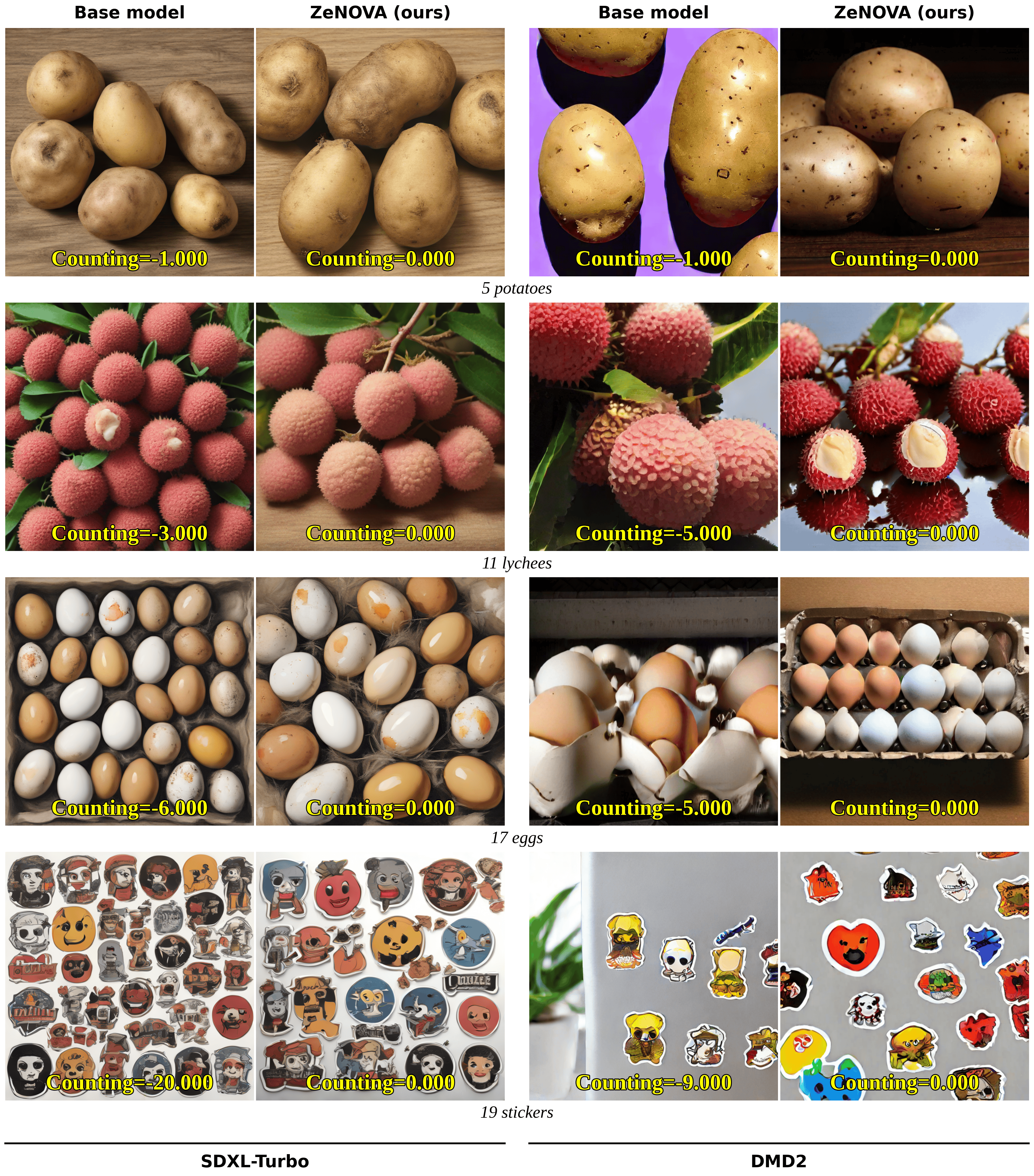}
     \caption{Images generated by SDXL-Turbo and DMD2 with ZeNOVA, using the counting reward as the reward signal.}
     \label{fig:additional_counting}
\end{figure}

\begin{figure}[t]
     \centering
     \includegraphics[width=0.95\linewidth]{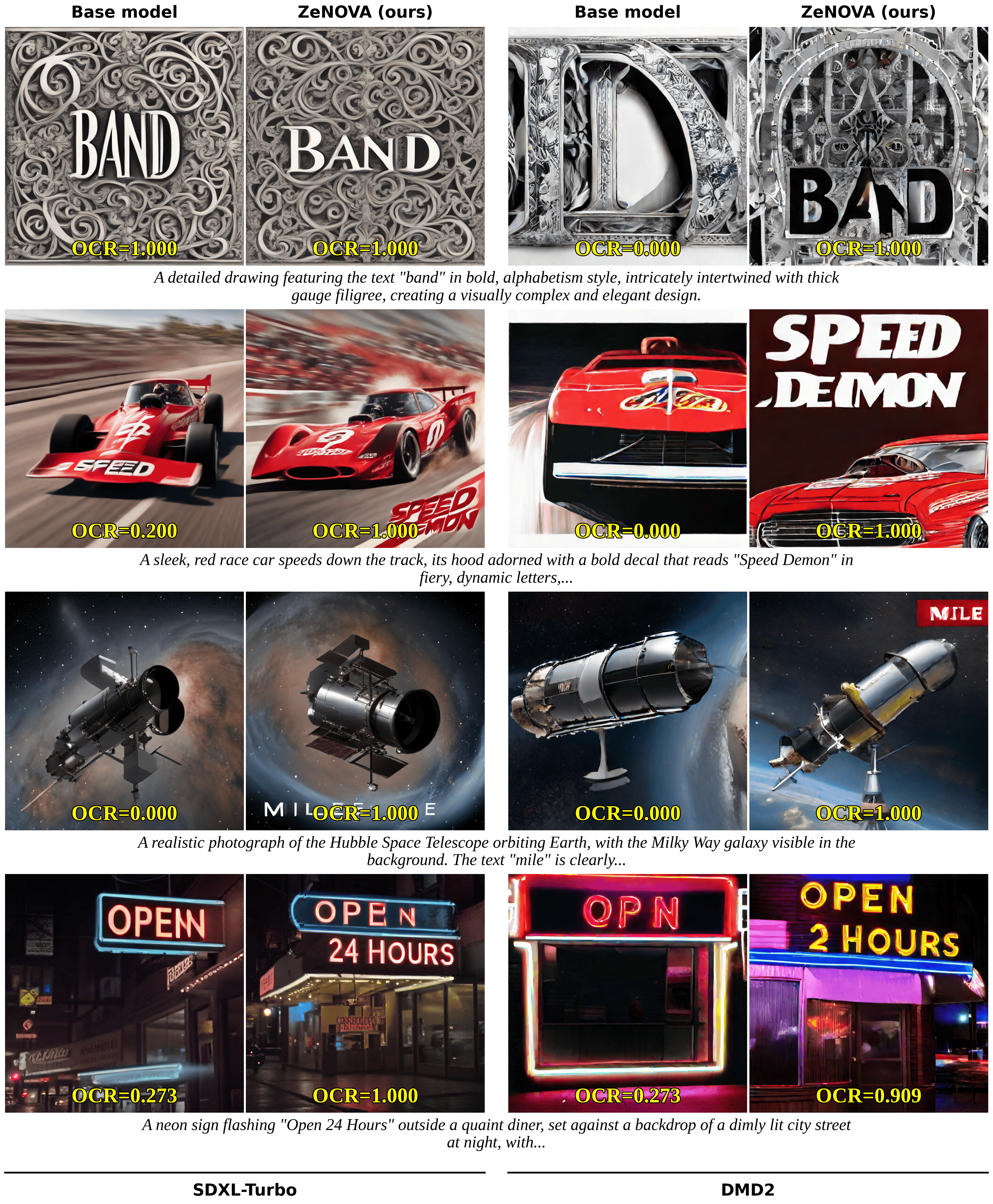}
     \caption{Images generated by SDXL-Turbo and DMD2 with ZeNOVA, using the OCR reward as the reward signal.}
     \label{fig:additional_ocr}
\end{figure}

\begin{figure}[t]
     \centering
     \includegraphics[width=\linewidth]{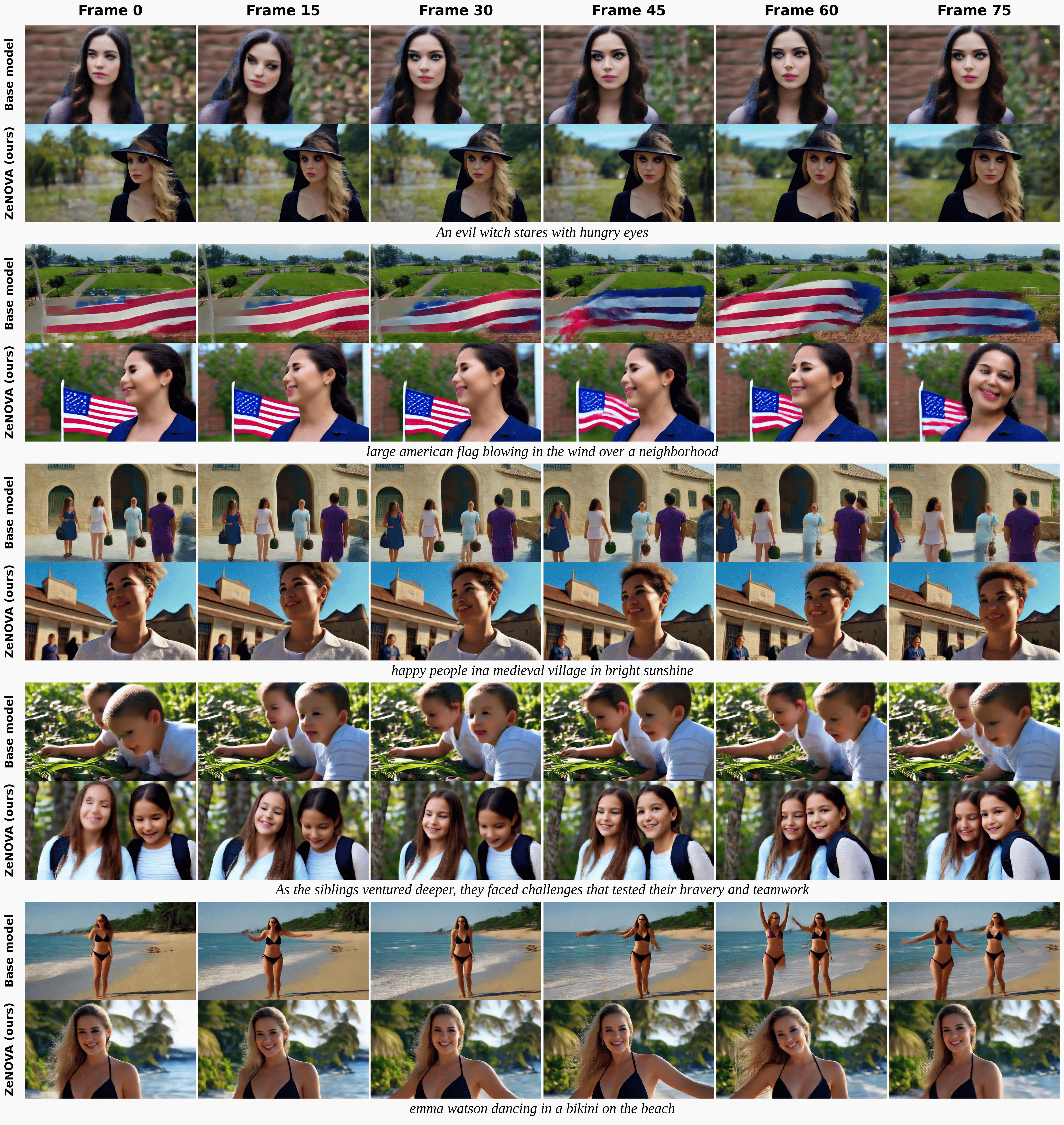}
     \caption{Vidoes generated by rCM with ZeNOVA, using a sum of three VideoAlign rewards as the reward signal.}
     \label{fig:additional_video}
\end{figure}

\end{document}